\definecolor{ududff}{rgb}{0.30196078431372547,0.30196078431372547,1}
\definecolor{xdxdff}{rgb}{0.49019607843137253,0.49019607843137253,1}
\definecolor{uuuuuu}{rgb}{0.26666666666666666,0.26666666666666666,0.26666666666666666}
\DeclareRobustCommand*\cal{\@fontswitch\relax\mathcal}
\theoremstyle{plain}
\newtheorem{theorem}{\protect\theoremname}
\theoremstyle{plain}
\theoremstyle{plain}
\newtheorem{proposition}[theorem]{\protect\propositionname}
\theoremstyle{plain}
\newtheorem{lemma}[theorem]{\protect\lemmaname}
\theoremstyle{plain}
\theoremstyle{plain}
\providecommand{\algorithmname}{Algorithm}
\providecommand{\lemmaname}{Lemma}
\providecommand{\theoremname}{Theorem}
\providecommand{\corname}{Corollary}
\providecommand{\propositionname}{Proposition}
\providecommand{\remarkname}{Remark}
\newtheorem{lemf}{Lemma f.}
\newcommand{\cA}{\mathcal{A}}
\newcommand{\cD}{\mathcal{D}}
\newcommand{\cH}{\mathcal{H}}
\newcommand{\cI}{\mathcal{I}}
\newcommand{\cM}{\mathcal{M}}
\newcommand{\cN}{\mathcal{N}}
\newcommand{\EE}{\mathbb{E}}
\newcommand{\NN}{\mathbb{N}}
\newcommand{\PP}{\mathbb{P}}
\newcommand{\RR}{\mathbb{R}}
\newcommand{\indic}{\mathbf{1}}
\renewcommand{\leq}{\leqslant}
\renewcommand{\geq}{\geqslant}
\renewcommand{\hat}{\widehat}
\newglossaryentry{T}
{
    type=environment,
    name={\ensuremath{[T]}},
    description = {Set of time step until $T$ : $\{1,2,...,T\}$},
    sort={0001}
}
\newglossaryentry{cA}
{
    type=environment,
    name={\ensuremath{\mathcal{A}}},
    description = {Combinatorial set of interest include in $\{0,1\}^{d}$},
    sort={0101}
}
\newglossaryentry{At}
{
    type=environment,
    name={\ensuremath{A(t)}},
    description={Action taken at time $t$},
    sort={0102}
}
\newglossaryentry{dim}
{
    type = environment,
    name = {\ensuremath{d}},
    description = {Number of arms, dimension of the problem},
    sort={0103}
}
\newglossaryentry{dimdecision}
{
    type=environment,
    name= {\ensuremath{m}},
    description = {Maximal size of a decision},
    sort={0104}
}
\newglossaryentry{Xt}
{
    type=environment,
    name={\ensuremath{X(t)}},
    description = {Random vector of dimension $d$ representing the reward of each arm at time $t$},
    sort={0201}
    }
\newglossaryentry{Yt}
{
    type=environment,
    name={\ensuremath{Y(t)}},
    description = {Observation made by the learner at time $t$ : $Y(t) := A(t)\odot X(t)$},
    sort={0202}
    }
\newglossaryentry{odot}
{
    type=environment,
    name={\ensuremath{\odot}},
    description = {The Hadamard operator of two vectors or matrix $A$,$B$ of the same size. $(A \odot B)_{i,j} := A_{i,j}  B_{i,j}$  },
    sort={0203}
    }
\newglossaryentry{mustar}
{
    type=environment,
    name={\ensuremath{\mu^{\star}}},
    description={The true unknown parameters},
    sort={0204}
    }
\newglossaryentry{Astar}
{
    type=environment,
    name={\ensuremath{A^{\star}}},
    description={The optimal action},
    sort={0205}
}
\newglossaryentry{DeltaA}
{
    type=environment,
    name={\ensuremath{\Delta_{A}}},
    description={The reward gap between the optimal decision and the decision $A$},
    sort={0206}
}
\newglossaryentry{Deltamin}
{
    type=environment,
    name={\ensuremath{\Delta_{\min}}},
    description={The minimal reward gap, the reward gap between the optimal decision and one of the second best decision},
    sort={0207}
    }
    \newglossaryentry{Deltamax}
{
    type=environment,
    name={\ensuremath{\Delta_{\max}}},
    description={The maximal reward gap, the reward gap between the optimal decision and one of the worse decision},
    sort={0208}
    }
\newglossaryentry{Deltat}
{
    type=environment,
    name={\ensuremath{\Delta(t)}},
    description={Reward gap at time $t$ of the action $A(t)$},
    sort={0208}
}
\newglossaryentry{Deltatilde}
{
    type=environment,
    name={\ensuremath{\widetilde{\Delta}}},
    description={Gap between the reward of the optimal decision and the reward of the second best decision considered in \cite{zhang2021suboptimality}},
    sort={0208}
}
\newglossaryentry{sigma}
{
    type=environment,
    name={\ensuremath{\sigma}},
    description={Sub-gaussian constant of the problem},
    sort={0209}
}
\newglossaryentry{Rtmu}
{
    type=environment,
    name={\ensuremath{R(T, \mu^{\star})}},
    description={Regret at time $T$ for instance $\mu^{\star}$},
    sort={0502}
}
\newglossaryentry{carms}
{
    type=environment,
    name={\ensuremath{\cI}},
    description={Arm belonging to a confounding decision},
    sort={0503}
    }
\newglossaryentry{muhatt}
{
    type=algorithm,
    name={\ensuremath{\hat{\mu}(t)}},
    description={The empirical mean of the parameters at time t},
    sort={0301}
    }
\newglossaryentry{thetat}
{
    type=algorithm,
    name={\ensuremath{\theta(t)}},
    description={Sample from the posterior distribution of $\mu^{\star}$ given by the algorithm at time $t$},
    sort={0302}
    }
    \newglossaryentry{Nt}
    {
    type=algorithm,
    name={\ensuremath{N(t)}},
    description={Number of times each arm has been selected until time $t$},
    sort={0303}
    }
    \newglossaryentry{MAt}
    {
    type=algorithm,
    name={\ensuremath{M_A(t)}},
    description={Number of time the action $A$ has been selected until time $t$},
    sort={0304}
    }
    \newglossaryentry{Vt}
    {
    type=algorithm,
    name={\ensuremath{V(t)}},
    description={Squared diagonal matrix containing the inverse of $N(t)$. The variance of the Thompson samples are proportional to V(t)}, 
    sort={0305}
    }
    \newglossaryentry{Ht}
    {
    type=algorithm,
    name={\ensuremath{\cH(t)}},
    description={History up to time $t$, (The observation at time t is not included)}, 
    sort={0306}
    }
    \newglossaryentry{alphat}
    {
    type=algorithm,
    name={\ensuremath{\alpha(t)}},
    description={First parameter of the Beta distribution at time $t$}, 
    sort={0307}
    }
    \newglossaryentry{betat}
    {
    type=algorithm,
    name={\ensuremath{\beta(t)}},
    description={Second parameter of the Beta distribution at time $t$}, 
    sort={0308}
    }
    \newglossaryentry{gt}
    {
    type=algorithm,
    name={\ensuremath{g(t)}},
    description={Bonus variance added to the Thompson samples distribution at time $t$}, 
    sort={0309}
    }
    \newglossaryentry{ft}
    {
    type=algorithm,
    name={\ensuremath{f(t)}},
    description={Maximal concentration inequality function at time $t$, given by \cite{degenne2016}}, 
    sort={0310}
    }
    \newglossaryentry{tildeft}
    {
    type=algorithm,
    name={\ensuremath{\tilde{f}(t)}},
    description={Upper bound on the quantity $g(t)* \ln(|\cA|)$. Controls the deviation random part of the Thompson sample of the arm played.}, 
    sort={0311}
    }
    \newglossaryentry{ht}
    {
    type=algorithm,
    name={\ensuremath{h(t)}},
    description={Vanishing term that controls how close the Thompson sample of the optimal action is to its true mean during a clean run}, 
    sort={0312}
    }
    \newglossaryentry{Zt}
    {
    type=algorithm,
    name={\ensuremath{Z(t)}},
    description={Random i.i.d. Gaussian unitary vector of dimension $d$ folowing $\cN(0,I_d)$ used to generate the Thompson samples at time $t$}, 
    sort={0313}
    }
    \newglossaryentry{Q}
    {
    type=algorithm,
    name={\ensuremath{Q}},
    description={Tail function of a standard Gaussian distribution $Q(x):= \PP(\cN(0,1) \ge x)$}, 
    sort={0314}
    }
    \newglossaryentry{Ustars}
    {
    type=algorithm,
    name={\ensuremath{U^{\star}(s)}},
    description={Quantity related to the empirical mean of the best decision at time $s$}, 
    sort={0315}
    }
    \newglossaryentry{Sstars}
    {
    type=algorithm,
    name={\ensuremath{S^{\star}(s)}},
    description={Quantity related to the random part of the Thompson sample of the best decision at time $s$}, 
    sort={0316}
    }
    \newglossaryentry{Ws}
    {
    type=algorithm,
    name={\ensuremath{W(s)}},
    description={Counting process of the number of times the random part of the Thompson sample of the mean of the best decision deviated}, 
    sort={0317}
    }
    \newglossaryentry{P1}
    {
    type=algorithm,
    name={\ensuremath{P_1}},
    description={Polynomial term in $m,d,1/\Delta_{\min},\sigma$ that represents the waiting time so that the best action is played more than fractional power of $t$}, 
    sort={0318}
    }
    \newglossaryentry{P2}
    {
    type=algorithm,
    name={\ensuremath{P_2}},
    description={Polynomial term in $m,1/\Delta_{\min},\sigma$ that represents the waiting time so that the average mean of the best action is close to its true mean}, 
    sort={0319}
    }
    \newglossaryentry{P}
    {
    type=algorithm,
    name={\ensuremath{P}},
    description={Polynomial term in $m,d,1/\Delta_{\min},\sigma$ replacing the exponential term in the previous annalysis of \cite{degenne2016}}, 
    sort={0320}
    }
    \newglossaryentry{evAt}
    {
    type=events,
    name={\ensuremath{\mathfrak{A}_t}},
    description={Event of a clean run at time $t$}, 
    sort={0401}
    }
    \newglossaryentry{evBt}
    {
    type=events,
    name={\ensuremath{\mathfrak{B}_t}},
    description={The empirical mean deviated too much during the run until time $t$}, 
    sort={0402}
    }
    \newglossaryentry{evCt}
    {
    type=events,
    name={\ensuremath{\mathfrak{C}_t}},
    description={The random vector $Z(t)$ deviated to much during the run until time $t$}, 
    sort={0403}
    }
    \newglossaryentry{evDt}
    {
    type=events,
    name={\ensuremath{\mathfrak{D}_t}},
    description={The empirical mean of the reward of the optimal action deviated too much during the run until time $t$}, 
    sort={0404}
    }
    \newglossaryentry{evEt}
    {
    type=events,
    name={\ensuremath{\mathfrak{E}_t}},
    description={The random part of the Thompson sample of the best decision deviated too few times during the run until time $t$}, 
    sort={0405}
    }
    \newglossaryentry{evZt}
    {
    type=events,
    name={\ensuremath{\mathfrak{Z}_t}},
    description={The algorithm plays a suboptimal decision  at time $t$}, 
    sort={0406}
    }
    \newglossaryentry{evFt}
    {
    type=events,
    name={\ensuremath{\mathfrak{F}_t}},
    description={At time $t$, the empirical mean of one of the arms played is too far from the true mean}, 
    sort={0407}
    }
    \newglossaryentry{evGt}
    {
    type=events,
    name={\ensuremath{\mathfrak{G}_t}},
    description={At time $t$, the Thompson sample of the decision played is too far from its true mean}, 
    sort={0408}
    }
    \newglossaryentry{evHt}
    {
    type=events,
    name={\ensuremath{\mathfrak{H}_t}},
    description={At time $t$, the random part of the Thompson sample of the decision played deviated too much}, 
    sort={0409}
    }
\newglossaryentry{C1}
    {
    type=constants,
    name={\ensuremath{C_1}},
    description={$C_1 = \sqrt{8} + \sqrt{72}$}, 
    sort={0601}
    }
    \newglossaryentry{C2}
    {
    type=constants,
    name={\ensuremath{C_2}},
    description={$C_2=\frac{1}{2^{13/4 + C^{2}_{3}/2} C_3 \sqrt{2 \pi}}$}, 
    sort={0602}
    }
    \newglossaryentry{C3}
    {
    type=constants,
    name={\ensuremath{C_3}},
    description={$C_3 =  \sqrt{1.238}$}, 
    sort={0603}
    }
    \newglossaryentry{C4}
    {
    type=constants,
    name={\ensuremath{C_4}},
    description={$C_4 =  C_2 / 8$}, 
    sort={0604}
    }
    \newglossaryentry{C5}
    {
    type=constants,
    name={\ensuremath{C_5}},
    description={$C_5 =  23$}, 
    sort={0605}
    }
    \newglossaryentry{C6}
    {
    type=constants,
    name={\ensuremath{C_6}},
    description={$C_6 =  C_4/2$}, 
    sort={0606}
    }
    \newglossaryentry{C}
    {
    type=constants,
    name={\ensuremath{C}},
    description={$C = 768$ constant in from of the log term in the regret bound}, 
    sort={0607}
    }
    \newglossaryentry{Cprime}
    {
    type=constants,
    name={\ensuremath{C'}},
    description={$C' = 2304\ln(2)$ constant in the loglog term in the regret bound}, 
    sort={0608}
    }
    \newglossaryentry{alpha}
    {
    type=constants,
    name={\ensuremath{\alpha}},
    description={$\alpha = 3/4 - (1/2)(C_3)^2 \approx 0.131$}, 
    sort={0610}
    }
    \newglossaryentry{Cmu}
{
    type=constants,
    name={\ensuremath{C(\mu^{\star})}},
    description={Constant in from of the logarithmic term in the asymptotic lower bound given by Graves problem \cite{graves_asymptotically_1997}},
    sort={0501}
    }
\title{Thompson Sampling For Combinatorial Bandits: Polynomial Regret and Mismatched Sampling Paradox}
\author{%
  Raymond Zhang \\
  Laboratoire des signaux et systèmes\\
  Université Paris-Saclay, CNRS, CentraleSupélec,\\
  91190, Gif-sur-Yvette, France. \\
  \texttt{Raymond.zhang@centralesupelec.fr} \\
\And
  Richard Combes \\
  Laboratoire des signaux et systèmes\\
  Université Paris-Saclay, CNRS, CentraleSupélec,\\
  91190, Gif-sur-Yvette, France. \\
 \texttt{Richard.combes@centralesupelec.fr} \\
}
\begin{document}
\maketitle
\begin{abstract}
  We consider Thompson Sampling (TS) for linear combinatorial semi-bandits and subgaussian rewards. We propose the first known TS whose finite-time regret does not scale exponentially with the dimension of the problem. We further show the “mismatched sampling paradox”: A learner who knows the rewards distributions and samples from the correct posterior distribution can perform exponentially worse than a learner who does not know the rewards and simply samples from a well-chosen Gaussian posterior. The code used to generate the experiments is available at \href{https://github.com/RaymZhang/CTS-Mismatched-Paradox}{https://github.com/RaymZhang/CTS-Mismatched-Paradox}
\end{abstract}
\section{Introduction and Setting}
\label{sec:introduction}

We consider the linear combinatorial bandit problem with semi-bandit feedback: At time $t \in \gls{T}:= \left\{1,..., T \right\}$ a learner selects an action $\gls{At} \in \gls{cA}$ where the set of available actions $\cA \subset \{0,1\}^d$ is a known combinatorial set, i.e., a set of binary vectors. Then the environment draws a random vector $\gls{Xt} \in \RR^{\gls{dim}}$, and the learner then observes $\gls{Yt} = A(t) \gls{odot} X(t)$, where $\odot$ denotes the Hadamard (element-wise) product, and the learner obtains a reward of $A(t)^\top X(t)$. We assume that the vectors $(X(t))_{t \ge 1}$ are drawn i.i.d. from some distribution with expectation $\EE(X(t)) = \mu^\star$ and that the entries of $X(t)$ are independent. The vector $\gls{mustar}$ lies in some $\Theta \subset \RR^d$ and is initially unknown to the learner. The learner wants to minimize the regret:
\begin{equation*}
   \gls{Rtmu} := T \max_{A \in \cA} \Big\{ A^{\top} \mu^{\star}  \Big\} - \EE \left[\sum_{t \in [T]}  A(t)^{\top}\mu^{\star} \right] = \EE \left[\sum_{t\in [T]}  \Delta_{A(t)} \right].
\end{equation*}
Where $\gls{Astar} \in \arg\max_{A \in \cA} \{A^\top \mu^\star \}$ is the optimal action and $\gls{DeltaA} := {A^\star}^{\top} \mu^{\star} - A^{\top} \mu^{\star}$ is the reward gap between action $A$ and optimal action $A^\star$. The regret is the expected difference between the sum of rewards obtained by an oracle who knows $\mu^\star$ and always selects the optimal decision and that obtained by the learner. We assume that the optimal decision is unique. To state regret bounds, we use the following notation. We denote by $
\gls{Deltamin} := \min_{A \in \cA: \Delta_{A} > 0} \Delta_A
$ 
the minimal reward gap and 
$
\gls{Deltamax} := \max_{A \in \cA} \Delta_A
$
the maximal reward gap, $\gls{Deltat} := \Delta_{A(t)}$ the reward gap of the action selected at time $t$, 
$\gls{dimdecision} := \max_{A \in \cA} \|A\|_{1}$ the maximal size of an action.

The regret depends on the distributions of the random vector $X(t)$, which generates the rewards, and we will assume throughout the paper that $X(t)$ is $\gls{sigma}^2$-subgaussian so that for all $ \lambda \in \RR^d$:
\begin{equation*}
    \EE[\exp(\lambda^\top X(t))] \le e^{ \lambda^\top \mu^\star + \frac{\|\lambda\|_{2}^2 \sigma^2}{2}}.
\end{equation*}
This assumption holds in many scenarios of interest, for instance, when $X(t) \in [a,b]^d$ with $\sigma^2 = (b-a)^2/4$, or when $X(t)$ is normally distributed with a covariance matrix smaller than $\sigma^2 I_d$. We assume that $\sigma$ is known, or at least upper-bounded.

One of the candidate algorithms for this problem is Thompson Sampling (TS), which is based on Bayesian inference. We consider a prior distribution $\pi_0$ over $\Theta$, a likelihood function $\ell$ and $\pi_t$ the \emph{posterior distribution} of $\mu^\star$ at time $t$ knowing the observations and the actions up to time $t$:
\[
	\pi_{t}(\mu) =  \frac{\prod_{s \in [t-1]}   \prod_{i \in [d]} [ A_i(s) \ell(X_i(s),\mu_i)  + (1-A_i(s)) ] \pi_0(\mu)}{\int_{\Theta} \prod_{s \in [t-1]} \prod_{i \in [d]} [ A_i(s) \ell(X_i(s),\mu_i)  + (1-A_i(s)) ] \pi_0(\mu) d\mu }  
\]
The TS algorithm with prior $\pi_0$ and likelihood $\ell$ consists in sampling $\theta(t)$ from distribution $\pi_{t}$, and select action $A(t) \in \arg\max_{a \in {\cA}} A^\top \theta(t)$. The random vector $\gls{thetat}$, called a Thompson Sample, acts as a proxy for the unknown $\mu^\star$ and guides the exploration.

We will show how TS behaves with either Gaussian likelihood $\ell(x,\mu) = (2 \pi)^{-1/2} e^{-(X-\mu)^2/2}$ or Bernoulli likelihood $\ell(x,\mu) = \indic\{x=1\} \mu + (1-\mu) \indic\{x=0\}$ in the next sections. If the likelihood $\ell(X_i(s),\mu_i)$ is selected as the distribution of $X_i(s)$ knowing $\mu_i$ then we say that TS is “natural”. If the likelihood $\ell(X_i(s),\mu_i)$ is different from the distribution of $X_i(s)$ knowing $\mu_i$, we will say that TS is “mismatched”. An example of mismatched TS would be to select $\ell$ as the Gaussian likelihood, although the actual distribution of $X(t)$ knowing $\mu^\star$ is, say, Bernoulli or some other bounded distribution. Ultimately, the likelihood function is a choice left up to the learner to control how TS explores the suboptimal actions. Perhaps counterintuitively, mismatched TS can outperform natural TS, as we will demonstrate. The prior $\pi_0$ can be chosen in various ways, for instance, Jeffrey's non-informative prior. It can even be chosen as an improper prior, where $\int_{\Theta} \pi_0(\mu) d\mu = +\infty$, as long as the integral in the definition of $\pi_{t}$ is well-defined.

TS is usually computationally simple to implement as it requires a linear maximization over the action space $\cA$ at each step, which we assume can be done in polynomial time in $m$ and $d$. This fact explains the practical appeal of TS since whenever linear maximization over $\cA$ can be implemented efficiently; the algorithm has low computational complexity. Also, for some problem instances, it tends to perform well numerically.

\section{Related Work and Contribution}

Combinatorial bandits are a generalization of classical bandits studied in \cite{lai1985}. Several asymptotically optimal algorithms are known for classical bandits, including the algorithm of \cite{lai1987},  KL-UCB \cite{cappe2012}, DMED \cite{honda2010} and TS \cite{thompson1933,kaufmann12a}. Other algorithms include the celebrated UCB1 \cite{auer2002}. Numerous algorithms for combinatorial semi-bandits have been proposed, many of which naturally extend algorithms for classical bandits to the combinatorial setting. CUCB \cite{chen2013,kveton2014tight} is a natural extension of UCB1 to the combinatorial setting. ESCB \cite{combes2015,degenne2016} is an improvement of CUCB, which leverages the independence of rewards between items. AESCB \cite{cuvelier2020} is an approximate version of ESCB with roughly the same performance guarantees and reduced computational complexity. TS for combinatorial semi bandits was considered in \cite{gopalan2014,komiyama2015optimal,russo2016information,wang2018,perrault2020} while TS for linear bandit was studied in \cite{agrawal2013thompson,abeille2017linear}. Also, combinatorial semi-bandits are a particular case of structured bandits, for which there exist asymptotically optimal algorithms such as OSSB \cite{combes2017}. It was also shown in \cite{komiyama2015optimal} that TS is asymptotically and finite time optimal for matroid-like action sets. The Bayesian regret of TS has b
een extensively studied, e.g., \cite{russo2016information}. 

Three types of regret bounds exist in the literature: $R(T,\mu^{\star})$ is the problem-dependent regret, i.e., the regret of the learner on the particular problem instance defined by $\mu^\star$. The minimax regret is $\max_{\mu \in \Theta} R(T,\mu^{\star})$ and the Bayesian regret $\EE_{\mu^{\star}}\left[ R(T,\mu^{\star})\right]$ where $\mu^{\star}$ is drawn according to some prior distribution. We will state regret bounds as a function of the parameters $T$, the time horizon; $ d$, the problem dimension; $ m$, the maximal size of an action; and $\Delta_{\min}$, the minimal gap.

The paper \cite{kveton2014tight} showed that the problem-dependent regret of CUCB is upper bounded by $O\left( \frac{d m  \ln T}{ \Delta_{min}} \right)$ and its minimax regret is upper bounded by $O( \sqrt{\sigma^2 d m T \ln T} + dm)$. \cite{degenne2016} showed that the problem-dependent regret of ESCB is upper bounded by $O\left( \frac{\sigma^2 d (\ln m)^2 \ln T}{  \Delta_{min}} + \frac{d m^3}{\Delta_{\min}^2}\right)$ and its minimax regret is upper bounded by $O( \sqrt{d (\ln m)^2 T \ln T} + dm)$. \cite{perrault2020} showed that the problem-dependent regret of TS is upper bounded by 
\[
O\left( \frac{ \sigma^2d(\ln m)^2  }{\Delta_{\min}} \ln T+ \frac{d  m^3}{\Delta_{\min}^2} 
           + m \left( \sigma \frac{m^2 + 1}{\Delta_{min}} \right)^{2 + 4 m} \right).
\]
While this bound is almost optimal in the asymptotic regime where $T \to \infty$, it is exponentially suboptimal in the finite time regime since the last term of this expression scales exponentially with $m$. Unless one assumes a particular type of action set as in \cite{komiyama2015optimal}, all the known generic regret upper bound for TS in the literature \cite{gopalan2014}, \cite{wang2018}, \cite{perrault2020} feature an exponential dependency on $m$. \cite{zhang2021suboptimality} further showed that the problem-dependent regret of TS for some simple combinatorial set can be lower bounded by an expression scaling exponentially in $m$, 
\begin{equation*}
    R(T,\theta) \ge  \frac{{\Delta_{\min}} }{ 4 p_{{\Delta_{\min}}}}(1 - (1-p_{{\Delta_{\min}}})^{T-1}), \text{  with   }
    p_{{\Delta_{\min}}} = \exp\left\{- \frac{2 m}{ 9} \left( \frac{1}{2} - (\frac{{\Delta_{\min}} }{ m} + \frac{1}{\sqrt{m}})\right)^2   \right\} .
\end{equation*}
They further showed that Thompson Sampling is not minimax optimal for some combinatorial bandit problems. Therefore, the exponential term is not an artefact of the analysis of \cite{perrault2020}. It is also noted that the regret upper bounds for ESCB and CUCB do not feature this exponential dependency in $m$, suggesting that those algorithms are better in the finite time regime than the versions of TS analyzed so far in the literature. This is unfortunate because TS usually has very low computational complexity, and having an algorithm with both low computational complexity and low regret in the finite time regime would be highly desirable.

\textbf{Our contribution :} \textbf{(i)} We propose a new variant of TS with a regret upper bounded by:
\[
O\left(\frac{\sigma^2 d\ln m}{\Delta_{\min}}\ln T + \frac{\sigma^2 d^{2}m\ln m}{\Delta_{\min}} \ln \ln T + P\left(m,d,\frac{1}{\Delta_{\min}}, \Delta_{\max}, \sigma\right) \right)
\]
where $P$ is a polynomial in $m,d,\frac{1}{\Delta_{\min}}, \Delta_{\max}$. This polynomial term is a clear improvement over the bound of \cite{perrault2020} in the finite time, high dimensional regime where $T$ is relatively small and $m$ is large. Indeed, the last term in this bound $P\left(m,d, 1 /\Delta_{\min}, \Delta_{\max}, \sigma\right)$ will be much smaller than the last term in the bound of \cite{perrault2020} $m \left( \sigma (m^2 + 1)/\Delta_{\min} \right)^{2 + 4 m}$ which is exponential in $m$. To design our variant, we add a slight exploration boost to TS, which vanishes as $T \to \infty$ but significantly impacts the algorithm behaviour when $T$ is moderate and $m$ is large. Also note that the improvement in the $\ln m$ term comes from a direct application of a result in \cite{perrault_when_2023}.

\textbf{(ii)} We design new proof strategies to derive this upper bound, which are based on carefully bounding the sample path behaviour of TS. We believe those strategies are an essential contribution to the analysis of TS and enable us to show that with high probability, TS will sample the optimal action at least $\Omega(t^{\alpha})$ times with $\alpha > 0$. This number serves to control the transient behaviour of TS.

\textbf{(iii)} As a by-product, we show the mismatched sampling paradox of TS: in some cases, mismatched TS performs exponentially better than natural TS. For instance, in a problem where $X(t)$ has Bernoulli distribution, a learner using a uniform (improper) prior and a Gaussian likelihood can perform exponentially better than a learner using a Beta prior (which includes Jeffreys' prior) and the Bernoulli likelihood. In essence, trying to exploit the learner's statistical knowledge about the model ends up harming them. 

\textbf{(iv)} We confirm our theoretical predictions using numerical experiments, which clearly show that our variant of TS outperforms by several orders of magnitude the Beta-based versions studied in the literature whose regret scales exponentially in the ambient dimension.

\section{Algorithms}
\label{sec:Algorithm}

In this section, we present three TS algorithms: B-CTS (Beta-Combinatorial Thompson Sampling), which is TS with a beta prior and a Bernoulli likelihood, G-CTS (Gaussian-Combinatorial Thompson Sampling) which is TS with a uniform (improper) prior and a Gaussian likelihood, and finally BG-CTS (Boosted Gaussian-Combinatorial Thompson Sampling), an algorithm we propose by introducing a carefully chosen exploration boost in G-CTS. 

\subsection{Notation}
We use the following notation to state algorithms. We define the statistics $\gls{Nt}:= \sum_{s\in [t-1]} A(s)$, the vector containing the number of times each item has been selected, $\gls{MAt}:= \sum_{s \in [t-1]} \indic\left\{A(s) = A\right\}$ the number of times action $A$ has been selected until $t$, $\gls{Vt}:= D_{N(t)}^{-1}$ the diagonal matrix whose diagonal elements are $(1/N_1(t),...,1/N_d(t))$, $\gls{muhatt}:= V(t)\sum_{s \in [t-1]} X_i(s) \odot A_i(s)$ the empirical average estimator for $\mu^\star$. We denote by $\gls{Ht} = (A(s), A(s) \odot X(s))_{s \in [t-1]}$ the history which contains all the information collected by the learner up to time $t$, and which includes both the observations and the selected actions. For two vectors $\alpha,\beta$ in $(\RR^{+})^d$ we denote by $\text{Beta}(\alpha,\beta) = \bigotimes_{i=1}^d \text{Beta}(\alpha_i,\beta_i)$ the distribution of a vector with independent entries, and where the $i$-th entry is $\text{Beta}(\alpha_i,\beta_i)$ distributed.

\subsection{B-CTS}
B-CTS (see \ref{alg:betaTS} in the algorithm format)  considers the prior $\pi_{0} = \text{Beta}(\alpha(0),\beta(0))$ where $\alpha(0),\beta(0)$, are two vectors in $\RR^d$ chosen by the learner and the Bernoulli likelihood $\ell(x,\mu) = \indic\{x=1\} \mu + (1-\mu) \indic\{x=0\}$. If $\alpha(0) = \beta(0) = (1,...,1)$, then the prior $\pi_0$ is uniform over $[0,1]^d$. If $\alpha(0) = \beta(0) = (1/2,...,1/2)$, then the prior is Jeffreys' non-informative prior, which is proportional to the square root of the determinant of the Fisher information matrix. In B-CTS, the posterior distribution $\pi_t$ is also a Beta distribution so that the Beta-CTS selects the action:
\[
A(t) \in \arg\max_{A \in \cA} A^\top \theta(t) \text{ with } \theta(t) \sim \text{Beta}(\alpha(t),\beta(t))
\]
where vectors $\alpha(t),\beta(t)$ are defined as: 
\[
    \gls{alphat} := \sum_{s \in [t-1]} X(s) \odot A(s) + \alpha(0)  \text{ and } \gls{betat} := \sum_{s \in [t-1]} (1-X(s)) \odot A(s) + \beta(0).
\]

\subsection{G-CTS}

G-CTS considers the improper prior $\pi_{0}$ which is constant and equal to $1/\sigma$ on all $\RR^d$ and Gaussian likelihood $\ell(x,\mu) = (2 \pi \sigma^2)^{-1/2} e^{-(X-\mu)^2 /(2 \sigma^2)}$, where $\sigma^2$ is the variance. 
Of course, since $\pi_{0}$ is improper, for $\pi_t$ to be well-defined, we require that enough samples have been collected so that $N(t) \ge 1$. This is easily achieved by selecting $d$ actions $A^{1},..., A^{d}$ that cover $\cA$ in the sense that $\sum_{i \in [d]} A^i \ge 1$, and initializing the algorithm by sampling each of them once. In G-CTS, the posterior distribution $\pi_t$ is also a Gaussian distribution so that the G-CTS selects the action:
\[
A(t) \in \arg\max_{A \in \cA} A^\top \theta(t) \text{ with } \theta(t) \sim N(\hat{\mu}(t),\sigma^2 V(t)).
\]

\subsection{BG-CTS}

BG-CTS (see \ref{alg:BG-CTS} in the algorithm format) is a modification of G-CTS that we propose and that selects the action  
\[
A(t) \in \arg\max_{A \in \cA} A^\top \theta(t) \text{ with }  \theta(t) \sim N(\hat\mu(t),2g(t) \sigma^2 V(t)) \text{ with }
\]
\[
   \gls{gt} := \frac{f(t)}{\ln t} \text{ and }  \gls{ft} := (1+\lambda) \left( \ln t + (m+2)\ln\ln t + \frac{m}{2}\ln\left(1+ \frac{e}{\lambda}\right)\right) 
\]
and $\lambda \in \RR^{+}$ is an input parameter of the algorithm. BG-CTS behaves like G-CTS with a time-varying boost in its exploration denoted by $g(t)$. This boost asymptotically behaves like a constant $\lim_{t \rightarrow \infty} g(t) = 1+\lambda$. This boost ensures a much better finite-time behaviour, especially in the moderate $T$, large $m$ regime, to avoid the exponentially large regret that can occur in TS. The form of $g(t)$ is not arbitrary and is derived from the self-normalized concentration inequalities that control the large deviations of vector $\hat\mu(t)$. To make our analysis clearer, we will assume that there exists an exogenous process $(\gls{Zt})_{t \ge 1}$ of i.i.d. $\cN(0, I_d)$ vectors that serves as the random generator number for the Thompson samples with $\theta(t) = \hat{\mu}(t) + \sigma\sqrt{2g(t)} V^{\frac{1}{2}}(t) Z(t).$

This decomposition is useful for separating the algorithm's randomness from the bandit environment's randomness. We notice that for all $s\geq t$, $Z(s)$ and the history $\cH(t)$ are independent. Furthermore, we call $Z(s)$ the random part of the Thompson sample, $A^\top\theta(t)$ the Thompson sample of action $A$
\section{Main Result}

We now state Theorem~\ref{thm:regret_upper_bound}, our main result. 
		
\begin{theorem}
	\label{thm:regret_upper_bound}
	For $\lambda = 1$, and $\sigma^2$ subgaussian rewards, the regret of BG-CTS is upper bounded by: 
    \begin{equation}
        R(T,\mu^\star) \le C \frac{\sigma^2d\ln m}{\Delta_{\min}}\ln T + C'\frac{\sigma^2 d^{2}m\ln m}{\Delta_{\min}} \ln \ln T + P\left(m,d,\frac{1}{\Delta_{\min}}, \Delta_{\max}, \sigma\right)
    \end{equation}
with $C,C'$  universal constants and $P$ a polynomial in $m,d,\frac{1}{\Delta_{\min}}, \Delta_{\max}, \sigma$.
\end{theorem}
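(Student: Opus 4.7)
The plan is to decompose the regret along the lines of previous combinatorial TS analyses (in particular \cite{perrault2020,degenne2016}), but to replace the exponential burn-in term coming from the Gaussian anti-concentration tail $Q(\cdot)^m$ by a polynomial one. The central idea is to show that, with high probability, the optimal action is played at least $\Omega(t^{\alpha})$ times (with $\alpha \approx 0.131$), which shrinks the posterior variance $\sigma^2 g(t) V_{A^\star A^\star}(t)$ fast enough to make the anti-concentration event for $A^\star$ occur with non-negligible probability after only a polynomial number of rounds.

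First I would introduce the clean run event $\mathfrak{A}_t := \mathfrak{B}_t^c \cap \mathfrak{C}_t^c \cap \mathfrak{D}_t^c \cap \mathfrak{E}_t^c$, where $\mathfrak{B}_t,\mathfrak{C}_t$ control the empirical mean $\hat\mu(t)$ and the Thompson noise vector $Z(t)$ uniformly over the played arms via the self-normalized inequality $f(t)$ of \cite{degenne2016}, while $\mathfrak{D}_t,\mathfrak{E}_t$ respectively control the empirical mean and random part $S^\star(s)$ of the Thompson sample restricted to the optimal decision. Standard maximal concentration arguments (Azuma/sub-Gaussian over a martingale with variance proxy tied to $g(t)$, plus Gaussian tail bounds on $Z(t)$) give $\sum_{t\ge 1}\PP(\mathfrak{A}_t^c)\le P(m,d,1/\Delta_{\min},\Delta_{\max},\sigma)$, contributing only to the additive polynomial $P$.

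Next, on a clean run, I would bound the regret via a peeling argument à la ESCB: if a suboptimal $A$ is played at time $t$, then there exists at least one arm $i\in A$ whose empirical mean $\hat\mu_i(t)$ has been pulled too few times, yielding a deterministic $O\!\left(\frac{\sigma^2 d\ln m}{\Delta_{\min}}\ln T\right)$ term (plus the $\ln\ln T$ correction from $f(t) = (1+\lambda)(\ln t + (m+2)\ln\ln t + \cdots)$). The $d\ln m$ factor, rather than $d(\ln m)^2$, comes from plugging in the improved covering result of \cite{perrault_when_2023}, applied exactly as in the sub-Gaussian ESCB analysis; this step is routine once the concentration events are in place.

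The main obstacle, and the technical core of the theorem, is controlling the transient regime through two polynomial waiting times $P_1,P_2$. I would build a counting process $W(s)$ that tracks the number of times the random part $S^\star(s)$ of the Thompson sample for $A^\star$ exceeds $C_3\sqrt{2 g(s)\sigma^2 \mathbf{1}^\top V_{A^\star}(s)\mathbf{1}}$ along the indices $s$ where $A^\star$ is not yet played "enough". Each such deviation is lower bounded in probability by a constant (using the Gaussian lower tail $Q(C_3) \ge C_2$) and happens to imply, under $\mathfrak{A}_t$, that $A^\star$ is chosen, because the deterministic part $U^\star(s)$ dominates every suboptimal mean. A Chernoff-type argument on $W(s)$ then gives that after $P_1(m,d,1/\Delta_{\min},\sigma)$ rounds one has $M_{A^\star}(t)\ge t^{\alpha}$ with $\alpha = 3/4 - C_3^2/2$; iterating once more produces a second polynomial $P_2$ after which $\hat\mu_i(t)$ is within $\Delta_{\min}/(2m)$ of $\mu_i^\star$ for every $i\in A^\star$. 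Past $\max(P_1,P_2)$ the posterior variance of $A^{\star\top}\theta(t)$ is tiny and the standard TS anti-concentration argument goes through with only a constant inflation factor, replacing the $m\left(\sigma(m^2+1)/\Delta_{\min}\right)^{2+4m}$ term of \cite{perrault2020} by the polynomial $P$.

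Finally I would collect the three contributions: the clean-run regret ($\ln T$ plus $\ln\ln T$ terms with explicit constants $C=768$, $C'=2304\ln 2$), the union bound on $\mathfrak{A}_t^c$, and the transient $\max(P_1,P_2)\cdot\Delta_{\max}$ bound, into the stated form. The hardest part is the coupling argument that turns the Bernoulli-type lower bound on deviations of $S^\star(s)$ into a $t^{\alpha}$ lower bound on $M_{A^\star}(t)$ while simultaneously keeping the event on which $U^\star(s)$ already dominates the suboptimal Thompson samples; everything else is a careful bookkeeping of known concentration tools.
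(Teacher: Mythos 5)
Your proposal matches the paper's proof in all essentials: the same clean-run event built from $\mathfrak{B}_t,\mathfrak{C}_t,\mathfrak{D}_t,\mathfrak{E}_t$, the same counting process $W(s)$ with a multiplicative Chernoff bound yielding $M_{A^\star}(t)\ge\Omega(t^{\alpha})$ with $\alpha=3/4-C_3^2/2$, the same polynomial waiting times $P_1,P_2$, and the same use of the covering lemma of \cite{perrault_when_2023} for the $d\ln m$ leading term. One small wording slip: the per-round deviation probability of $S^\star(s)$ is not a constant $Q(C_3)$ but the polynomially decaying $Q(C_3\sqrt{\ln(t/2)})\sim t^{-C_3^2/2-1/4}$ --- which is exactly what produces your (correct) exponent $\alpha$ --- so this does not affect the argument.
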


Theorem~\ref{thm:regret_upper_bound} states that the regret of BG-CTS is upper bounded by an expression with both the correct behaviour when $T$ is large i.e., both this bound and that of \cite{perrault2020} give the same upper bound on $\lim\sup_{T \to \infty} \frac{R(T,\mu^\star)}{\ln T}$, but also a polynomial dependency in $m,d,\frac{1}{\Delta_{\min}}, \Delta_{\max}, \sigma$. This result predicts that BG-CTS performs much better than other TS variants in the regime where the time horizon $T$ is moderate and the decision size $m$ is large. 

A consequence of Theorem~\ref{thm:regret_upper_bound} combined with prior known results of \cite{zhang2021suboptimality} this is the mismatched sampling paradox for TS: a learner attempting to leverage his knowledge about the statistical model by using natural TS can perform exponentially worse than a learner willingly ignoring this knowledge and using mismatched TS by using an algorithm such as BG-CTS. Consider the example of ~\cite{zhang2021suboptimality} which features two disjoint actions of size $m=d/2$ written $(1,...,1,0,...,0)$ and $(0,...,0,1,...,1)$ and Bernoulli rewards. Suppose the learner attempts to leverage that she knows the rewards are Bernoulli and that the parameter space is $[0,1]^d$. She will employ a uniform prior over $[0,1]^d$ and the Bernoulli likelihood. This means using B-CTS and getting a regret that scales exponentially with $d$ as shown in~\cite{zhang2021suboptimality}. Using B-CTS with Jeffrey's prior does not help either. On the other hand, if the learner pretends she does not know the parameter space nor the rewards distribution and uses B-CTS, she gets a regret scaling only polynomially in $d$. Furthermore, Bernoulli rewards are $\sigma^2$ subgaussian with $\sigma^2 \leq 1/4$ as stated above, so our regret upper bound for BG-CTS applies to this example.

At first glance, it seems outright absurd to use a prior whose support is the whole of $\RR^d$ instead of the actual parameter space $[0,1]^d$, and using a Gaussian likelihood, which is continuous when the rewards are binary, but this paradoxically gives exponentially better performance. This paradox leads us to believe one should be careful when using posterior sampling for regret minimization. While this is natural for Bayesian inference, things seem to be much more complex when solving bandit problems, which feature both inference and control/exploration. 

\section{Regret Analysis}

In this section, we describe how to prove our main result. Due to space constraints, some proof elements are relegated to the appendix. In particular, to make this proof self-contained, we reproduce (without their proofs) the results from previous work that we use for our analysis. A reader can try to follow the proof with the help of the diagram in figure \ref{fig:diagramproof}.

A fundamental idea of our analysis is to consider the event $\gls{evAt}$ where both events occur :
\[
	\forall s \in [t], \forall A \in {\cal A}: |A^\top \theta(t)-A^\top \mu^{\star}| \le  C_1 \sigma \sqrt{m \ln t}  A^\top V^{\frac{1}{2}}(s) A,
\] 
\[
\text{and } |\{s \in [t]:{A^\star}^\top \theta(s) \ge  {A^\star}^\top \mu^\star \}| \ge C_2 t^{\alpha}
\]
Where $\gls{C1} = \sqrt{8} + \sqrt{72}$, $\gls{C2}=\frac{1}{2^{13/4 + C^{2}_{3}/2} C_3 \sqrt{2 \pi}}$, $(\gls{C3})^2  = 1.238$, $\gls{alpha} = 3/4 - (1/2)(C_3)^2 \approx 0.131$.

When $\mathfrak{A}_t$ occurs, we say that we observe a “clean run” up to time $t$. A clean run up to time $t$ implies that the Thompson sample of any action $A$ at any time $s \in [t]$ cannot exceed the sum of its expected value and a bonus proportional to $A^\top V^{\frac{1}{2}}(t) A$, which can be interpreted as the confidence bonus used in the CUCB algorithm. A clean run also implies that there exist many instants at which the Thompson sample of the optimal action is at least as large as its expected reward ${A^\star}^\top \mu^\star$.

\subsection{Probability of observing a clean run}

We first now show that most runs are clean, i.e., clean runs occur with high probability. Proposition~\ref{prop:clean_run_high_prob} states that the probability of a non-clean run up to time $t$ is much smaller than $1/t$, and therefore non-clean runs cause little regret. 

\begin{proposition}\label{prop:clean_run_high_prob}
For all $t \ge C_5$, we have $\PP(\mathfrak{A}_t) \ge 1 - 4 d t^{-2} - t^{-1} (\ln t)^{-2} - e^{- C_4 t^\alpha}$ with $\gls{C4} = C_2/8$ and $\gls{C5} = 23$.
\end{proposition}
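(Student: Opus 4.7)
The plan is to decompose the complement event $\mathfrak{A}_t^c$ into three bad events whose probabilities can be controlled separately. Let $\mathfrak{B}_t$ be the event that the empirical mean concentration fails at some $s \in [t]$ and some $A \in \cA$, let $\mathfrak{C}_t$ be the event that the Gaussian randomisation $Z(s)$ produces an overly large Thompson sample deviation at some $s \le t$, and let $\mathfrak{E}_t^c$ be the event that the count $|\{s \in [t]: {A^\star}^\top\theta(s) \ge {A^\star}^\top\mu^\star\}|$ falls below $C_2 t^\alpha$. I would argue $\mathfrak{A}_t^c \subset \mathfrak{B}_t \cup \mathfrak{C}_t \cup \mathfrak{E}_t^c$ and then bound each of the three probabilities in turn via the union bound, which directly matches the three error terms $4dt^{-2}$, $t^{-1}(\ln t)^{-2}$ and $e^{-C_4 t^\alpha}$.

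For $\mathfrak{B}_t$, I would establish that with probability at least $1 - 4dt^{-2}$, for every $s \in [t]$ and every $A \in \cA$, $|A^\top(\hat\mu(s) - \mu^\star)| \le \sqrt{8}\,\sigma\sqrt{m \ln t}\, A^\top V^{1/2}(s) A$. This is a standard self-normalised subgaussian concentration combined with peeling over the counts $N_i(s) \in [t]$ and a union bound over the $d$ arms; the prefactor $\sqrt{8}$ provides the first half of $C_1 = \sqrt{8} + \sqrt{72}$. For $\mathfrak{C}_t$, I would exploit the decomposition $\theta(s) = \hat\mu(s) + \sigma\sqrt{2g(s)}\,V^{1/2}(s) Z(s)$: conditional on $\cH(s)$, $A^\top V^{1/2}(s) Z(s)$ is Gaussian with variance $A^\top V(s) A$, and the inequality $\sqrt{A^\top V(s) A} \le A^\top V^{1/2}(s) A$ holds for binary $A$ since $\big(\sum_i a_i\big)^2 \ge \sum_i a_i^2$ for non-negative $a_i$. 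This lets me convert a standard Gaussian tail bound into the form $\sqrt{72}\,\sigma\sqrt{m \ln t}\, A^\top V^{1/2}(s) A$. The function $f(t)$ is calibrated precisely so that the union bound over $s \le t$ and $A \in \cA$ collapses to $t^{-1}(\ln t)^{-2}$; here one either follows a standard peeling argument in the spirit of \cite{degenne2016} or invokes the sharper covering result of \cite{perrault_when_2023}, which improves the dependence on $\cA$ to $\ln m$ instead of $m\ln(ed/m)$.

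For $\mathfrak{E}_t^c$ I would work conditionally on $\mathfrak{B}_t^c$. At each step $s$, conditional on $\cH(s)$, the event ${A^\star}^\top\theta(s) \ge {A^\star}^\top\mu^\star$ reduces to a Gaussian tail, with conditional probability $Q\big({A^\star}^\top(\mu^\star - \hat\mu(s))/\sigma\sqrt{2g(s)\,{A^\star}^\top V(s) A^\star}\big)$. On $\mathfrak{B}_t^c$ the argument of $Q$ is bounded by $C_3\sqrt{\ln s}$ with $C_3^2 = 1.238$, giving a per-step lower bound $Q(C_3\sqrt{\ln s}) \gtrsim s^{-C_3^2/2}/\sqrt{\ln s}$. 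Summing over $s \le t$ yields a compensator of order $t^{1 - C_3^2/2}/\sqrt{\ln t}$, and a Bernstein- or Freedman-type inequality applied to the martingale between the indicator count and its compensator produces the deviation bound $e^{-C_4 t^\alpha}$, the exponent $\alpha = 3/4 - (1/2)C_3^2 \approx 0.131$ arising from balancing the expected count against the threshold $C_2 t^\alpha$.

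Combining the three bounds via the union bound yields the claim; the threshold $C_5 = 23$ is required so that $\ln t$ and $\ln\ln t$ are comfortably positive and the various asymptotic estimates kick in. The main obstacle is the third step, where one must couple the history-dependent Gaussian tail probabilities to a martingale concentration statement about the cumulative count of ``good'' times, while tracking constants carefully enough to guarantee that the resulting exponent $\alpha$ is strictly positive. This is precisely what pins down the otherwise peculiar value $C_3^2 = 1.238$ and the resulting $\alpha$, which must remain positive so that the clean-run event can be invoked downstream when controlling the transient regime of BG-CTS.
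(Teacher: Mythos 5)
There is a genuine gap in your third step, and it is exactly the point the paper's proof is built to avoid. Your decomposition uses only three bad events and, in particular, drops the event $\mathfrak{D}_t$ controlling the \emph{self-normalized} deviation $U^\star(s) = {A^\star}^\top(\hat\mu(s)-\mu^\star)/\sqrt{{A^\star}^\top V(s)A^\star}$ of the optimal action's empirical mean. You claim that on $\mathfrak{B}_t^c$ the argument of $Q$ is bounded by $C_3\sqrt{\ln s}$, but the per-item event $\mathfrak{B}_t^c$ only gives $|{A^\star}^\top(\hat\mu(s)-\mu^\star)|\le \sigma\sqrt{8\ln t}\,{A^\star}^\top V^{1/2}(s)A^\star$, and since ${A^\star}^\top V^{1/2}(s)A^\star$ can be as large as $\sqrt{m\,{A^\star}^\top V(s)A^\star}$ (Cauchy--Schwarz), this only yields $|U^\star(s)|\le\sigma\sqrt{8m\ln t}$. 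Dividing by the randomization scale $\sigma\sqrt{2g(s)}=\Theta(\sigma)$, the argument of $Q$ is $\Theta(\sqrt{m\ln t})$, so the per-step success probability degrades to $t^{-\Theta(m)}$ and the compensator is $t^{1-\Theta(m)}$, nowhere near $C_2 t^\alpha$ --- this would reintroduce precisely the exponential-in-$m$ behaviour the proposition is designed to eliminate. The paper removes the spurious $\sqrt m$ by invoking the self-normalized concentration inequality of Degenne and Perchet for the aggregated statistic $U^\star(s)$ directly (Lemma~\ref{lem:degeennesubgaussian}, event $\mathfrak{D}_t$), which costs $t^{-1}(\ln t)^{-2}$; this is the true origin of that term in the bound, not the Gaussian randomization event $\mathfrak{C}_t$, which in the paper is handled by a trivial union bound on $\|Z(s)\|_\infty$ and costs only $2dt^{-2}$ (combining with $\mathfrak{B}_t$ to give the $4dt^{-2}$).

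A secondary, fixable issue: the paper defines $\mathfrak{E}_t$ purely in terms of $S^\star(s)$, whose conditional law given $\cH(s)$ is exactly $\cN(0,2g(s)\sigma^2)$, so the increment probabilities $p(s)=Q(\sqrt{f(t)/g(s)})$ are \emph{deterministic} and a plain multiplicative Chernoff/Azuma bound applies. Your version conditions the whole event ${A^\star}^\top\theta(s)\ge{A^\star}^\top\mu^\star$ on the history, which makes $p(s)$ random and forces a Freedman-type argument with a compensator lower-bounded only on a good event; that can be made to work, but the paper's separation of $U^\star(s)$ (environment, handled by $\mathfrak{D}_t$) from $S^\star(s)$ (algorithmic randomness, handled by $\mathfrak{E}_t$) is what makes the martingale step clean. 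Your treatment of $\mathfrak{B}_t$ and the derivation of the constants $\sqrt{8}$ and $\sqrt{72}$ in $C_1$ otherwise matches the paper.
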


\textbf{Proof : } The proof is relatively technical, and involves decomposing $\mathfrak{A}_t$ according to the fluctuations of $\theta(t)$ and ${\hat \mu}(t)$. We decompose the Thompson sample of the optimal action as follows:
\[
	{A^{\star}}^\top \theta(s) = {A^{\star}}^\top \mu^\star + [U^\star(s) + S^\star(s)] \sqrt{{A^\star}^\top V(s) {A^\star}}
\]
with
\begin{align*}
	\gls{Ustars} := \frac{ {A^\star}^\top ( {\hat \mu}(s) - \mu^\star) }{\sqrt{ {A^\star}^\top V(s) A^\star} } \text{ and }
	\gls{Sstars} := \sigma\sqrt{2g(s)} \frac{{A^\star}^\top V^{1/2}(s) Z(s)}{\sqrt{ {A^\star}^\top V(s) A^\star} }
\end{align*}
which represent the deviation between the empirical mean and the expected reward, and the deviation of the Thompson sample from the expected value of the Thompson sample.

We introduce the following deviation events
\begin{align*}
	\gls{evBt} &:= \{ \max_{s \in [t]} \|V^{-\frac{1}{2}}(s) (\mu^{\star} - {\hat \mu}(s))\|_{\infty} \geq \sigma \sqrt{ 8 \ln t} \} &\gls{evCt}& := \{ \max_{s \in [t]} \| Z(s) \|_{\infty} \ge \sqrt{6 \ln t} \}  \\
	\gls{evEt} &:= \{    |\{s \in [t]: S^\star(s) \ge \sigma \sqrt{2f(t)}  \}| \le C_2 t^{\alpha} \} & \gls{evDt} &:= \{ \max_{s \in [t]}  U^\star(s)  \ge \sigma \sqrt{2f(t)} \}.  
\end{align*}
Each of those events can be interpreted as follows. $\mathfrak{B}_t$ means that the empirical mean of some item deviates from its expected value at least once, $\mathfrak{C}_t$ means that the randomization in the Thompson sample is abnormally large at least once, $\mathfrak{D}_t$ implies that the empirical mean of the optimal action deviates from its expected value at least once, and $\mathfrak{E}_t$ means that there exist too few instants at which $S^\star(t)$ is reasonably large.

Assume that none of $\mathfrak{B}_t$, $\mathfrak{C}_t$,  $\mathfrak{D}_t$, $\mathfrak{E}_t$ occur. For all $A \in {\cal A}$ and all $s \in [t]$:
\[
	|A^\top \theta(s)-A^\top \mu^\star | \leq |A^\top ({\hat \mu}(s)-\mu^\star)| +  |A^\top (\theta(s)-{\hat \mu}(s))| \le  C_1 \sigma \sqrt{m \ln t}A^\top V^{1/2}(s) A
\]
since if $\mathfrak{B}_t$ does not occur:
\[
|A^\top ({\hat \mu}(s)-\mu^\star)| \le \sigma \sqrt{8 \ln t} A^\top V^{1/2}(s) A
\]
and if $\mathfrak{C}_t$ does not occur and because $g(t) < 2(2m+1) < 6m$ see lemma f. \ref{lemf:ftgtbound}: 
\[
	|A^\top (\theta(s)-{\hat \mu}(s))| \le \sigma \sqrt{72 m \ln t} A^\top V^{1/2}(s) A
\]
Furthermore if $\mathfrak{D}_t$ and $\mathfrak{E}_t$ do not occur, there exists at least $C_2 t^{\alpha}$ instants such that $S^\star(s) \ge \sigma \sqrt{2f(t)}$ and $U^\star(s) \ge - \sigma \sqrt{2f(t)}$, which implies ${A^{\star}}^\top \theta(s) \ge {A^{\star}}^\top \mu^\star$. This means that:
\[
	|\{s \in [t]:{A^\star}^\top \theta(t) \ge  {A^\star}^\top \mu^\star \}| \ge C_2 t^{\alpha}
\]
Therefore $\mathfrak{A}_t$ occurs, and we have a clean run. Hence 
$
\PP(\mathfrak{A}_t) \ge 1 - \PP(\mathfrak{B}_t) - \PP(\mathfrak{C}_t) - \PP(\mathfrak{D}_t) - \PP(\mathfrak{E}_t) 
$.

We now upper bound the probability of each event separately.

\subsubsection{Probability of $\mathfrak{B}_t$}
Using a union bound $\PP(\mathfrak{B}_t) \le \sum_{i \in [d]} \PP\left(\max_{s \in [t]} \sqrt{V_i(s)}(\mu^{\star}_i(s) - {\hat \mu}(s)) \ge \sigma \sqrt{ 8 \ln t}\right) \le 2 d t^{-2}.$
We used the concentration inequality first derived by \cite{kveton2014tight} in their analysis of CUCB and recalled in lemma \ref{lem:concentraBT}. 

\subsubsection{Probability of $\mathfrak{C}_t$}
Using a union bound and a Chernoff bound for the Gaussian distribution (lemma~\ref{lemma:gaussian_tail}), wheres \gls{Q} is the tail function of the standard Gaussian distribution :
\[
	\PP(\mathfrak{C}_t) \le \sum_{i \in [d]} \sum_{s \in [t]} \PP\left( |Z_i(s)| \ge \sqrt{6 \ln t}\right) \le 2 t d Q(\sqrt{6 \ln t}) \le 2 t d \exp( - 3 \ln t  ) = 2 d t^{-2}
\]

\subsubsection{Probability of $\mathfrak{D}_t$} 
We have for $t \geq 2, \PP(\mathfrak{D}_t) \le t^{-1} (\ln t)^{-2}$ from the concentration inequality derived by \cite{degenne2016} in their analysis of ESCB and recalled in lemma \ref{lem:degeennesubgaussian}. 

\subsubsection{Probability of $\mathfrak{E}_t$} 
In order to control the probability of $\mathfrak{E}_t$, consider the following counting process:
\[
	\gls{Ws} = \sum_{u \in [s]} \indic\{  S^\star(u) \ge \sigma \sqrt{2f(t)} \}
\]
We wish to show that, with high probability, $W(s) \ge C_2 t^\alpha$. One may readily check that $W(s)$ is a sum of binary variables and that its conditional expected increment verifies:
\[
	p(s) = \EE(W(s)-W(s-1)| {\cal H}(s) ) = \PP( S^\star(s)  \ge \sigma \sqrt{2f(t)}|  {\cal H}(s) ) = Q(\sqrt{f(t) / g(s)})
\]
since, conditional to ${\cal H}(s)$, $S^\star(s)$ has a gaussian distribution with mean $0$ and variance $2g(s)$. Let us lower bound of the sum of $p$. By considering $t \ge C_5$ we have
\[
	\sum_{s \in [t]} p(s) \ge (t/2) p(t/2) = (t/2) Q(\sqrt{f(t) / g(t/2)}) \ge (t/2) Q(C_3\sqrt{\ln(t/2)})  \ge 2 C_2 t^\alpha
\]
using the fact that $p$ is increasing in $s$, and the study of $\frac{f(t)}{g(t/2)}$ done in lemma f. \ref{lemf:foverg} , and lemma~\ref{lemma:gaussian_tail} on the asymptotic behaviour of the $Q$ function. 

We can now conclude by applying a multiplicative Azuma-Hoeffding style bound to $W(s)$ presented in lemma~\ref{lem:azumachernof} in the appendix. With $C_4 = C_2/8$ we have :
\[
	\PP(\mathfrak{E}_t) \le \PP\left( W(t) \le  C_2 t^\alpha \right) \le \PP\left( W(t) \le (1/2) \sum_{s \in [t]} p(s) \right) \le e^{-\frac{1}{8}  \sum_{s \in [t]} p(s) } \le e^{-\frac{1} {8} C_2 t^\alpha} = e^{- C_4 t^\alpha}
\]

\subsubsection{Putting everything together} 
Adding up the four previous bounds, for all $t \ge C_5, \PP(\mathfrak{A}_t) \ge 1 - 4 d t^{-2} - t^{-1} (\ln t)^{-2} - e^{- C_4 t^\alpha}
$.

\subsection{Thompson sample for the optimal action on clean run}
We have already established that clean runs occur with high probability, and now we concentrate on how the algorithm behaves on those runs. Proposition~\ref{prop:clean_run1} further shows that the optimal action will be selected numerous times when a clean run occurs. In turn, the Thompson sample of the optimal action will be arbitrarily close to its expected reward. This argument is the cornerstone of our analysis (that we believe to be missing in the previous analysis of \cite{perrault2020}) and will allow us to control the transient behaviour of the algorithm.
\begin{proposition}\label{prop:clean_run1}
	For $t \ge P_1(m,d, \frac{1}{\Delta_{\min}}, \sigma)$, if $\mathfrak{A}_t$ occurs, then we must have $M_{A^\star}(t) \ge C_6 t^{\alpha}$ and ${A^{\star}}^\top \theta(t) \ge {A^{\star}}^\top \mu^\star - h(t)$.
	With $P_1$ a polynomial in $m,d, \frac{1}{\Delta_{\min}}, \sigma$, $\gls{C6} = C_4/2$, and $ \gls{ht} =  C_1 \sigma m \sqrt{\frac{m \ln t}{C_6 t^{\alpha}}}$. It is noted that $\lim_{t \rightarrow \infty}h(t) = 0$ 
\end{proposition}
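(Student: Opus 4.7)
The plan is to leverage the two guarantees packaged in the clean run $\mathfrak{A}_t$: on one hand, the uniform deviation bound $|A^\top\theta(s) - A^\top\mu^\star| \le C_1\sigma\sqrt{m\ln t}\,A^\top V^{1/2}(s) A$ for every $A$ and $s \le t$; on the other, the existence of at least $C_2 t^\alpha$ \emph{favorable} instants $s \in [t]$ at which ${A^\star}^\top\theta(s) \ge {A^\star}^\top\mu^\star$. I would split these favorable instants into those where $A(s)=A^\star$ (which feed $M_{A^\star}(t)$) and those where $A(s)$ is suboptimal, and show that the second count is only $O(\ln t)$. Once $t$ clears a polynomial threshold $P_1$, the first part then absorbs essentially all $C_2 t^\alpha$ favorable instants, giving the required lower bound on $M_{A^\star}(t)$; the Thompson-sample bound then drops out from the clean-run bonus bound applied at $A^\star$.

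For any favorable $s$ with $A(s)\neq A^\star$, the greedy selection rule and the definition of a favorable instant give $A(s)^\top\theta(s) \ge {A^\star}^\top\theta(s) \ge {A^\star}^\top\mu^\star$, so $A(s)^\top(\theta(s)-\mu^\star) \ge \Delta_{A(s)} \ge \Delta_{\min}$; combining with the clean-run bound forces
\[
    A(s)^\top V^{1/2}(s) A(s) \;=\; \sum_{i:\,A_i(s)=1} N_i(s)^{-1/2} \;\ge\; \frac{\Delta_{\min}}{C_1\sigma\sqrt{m\ln t}}.
\]
Since the sum on the left has at most $m$ positive terms, pigeonhole produces some $i^\star(s)\in A(s)$ with $N_{i^\star(s)}(s)\le \tau:=C_1^2\sigma^2 m^3 \ln t / \Delta_{\min}^2$. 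Because arm $i^\star(s)$ is actually pulled at time $s$, each individual arm $i$ can play the role of $i^\star(\cdot)$ at most $\tau+1$ times before its counter exceeds $\tau$. Summing this double counting over arms,
\[
    \bigl|\{\, s \in [t] :\, {A^\star}^\top\theta(s)\ge{A^\star}^\top\mu^\star,\ A(s)\neq A^\star \,\}\bigr| \;\le\; d(\tau+1) \;=\; O\!\left(\frac{d m^3 \sigma^2 \ln t}{\Delta_{\min}^2}\right).
\]

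Subtracting this from the $C_2 t^\alpha$ favorable instants and noting that $t^\alpha$ dominates $\ln t$ yields a polynomial $P_1$ in $m,d,1/\Delta_{\min},\sigma$ such that $d(\tau+1) \le \tfrac{15}{16} C_2 t^\alpha$ whenever $t\ge P_1$, and hence $M_{A^\star}(t) \ge (C_2/16)t^\alpha = C_6 t^\alpha$. For the second claim, each pull of $A^\star$ increments $N_i$ for every $i\in A^\star$, giving $N_i(t) \ge M_{A^\star}(t) \ge C_6 t^\alpha$ for each such arm; therefore ${A^\star}^\top V^{1/2}(t) A^\star \le m/\sqrt{C_6 t^\alpha}$, and the clean-run bound yields $|{A^\star}^\top\theta(t)-{A^\star}^\top\mu^\star| \le C_1\sigma\sqrt{m\ln t}\cdot m/\sqrt{C_6 t^\alpha}=h(t)$, which is the desired conclusion. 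The main obstacle is the pigeonhole/double-counting step: a crude aggregate bonus bound only yields $\sum_s A(s)^\top V^{1/2}(s) A(s) = O(\sqrt{dmt})$, which is far too weak to beat $t^\alpha$ with $\alpha < 1/2$, so one must commit to a single blamed undersampled arm per bad favorable instant and exploit that the counter of any blamed arm genuinely grows, converting a would-be $\sqrt{t}$-scale bound into the $\ln t$-scale bound needed to dominate the $C_2 t^\alpha$ favorable budget.
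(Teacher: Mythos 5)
Your proof is correct and follows essentially the same route as the paper's: the paper likewise splits the $C_2 t^\alpha$ favorable instants guaranteed by $\mathfrak{A}_t$ into optimal plays and instants where the clean-run bonus forces some arm in the played action to have counter below $C_1^2\sigma^2 m^3\ln t/\Delta_{\min}^2$, bounds the latter by $d$ times that threshold via the same pull-increments-the-counter double counting, and concludes with the identical $N_i(t)\ge M_{A^\star}(t)\ge C_6 t^\alpha$ step to get $|{A^\star}^\top\theta(t)-{A^\star}^\top\mu^\star|\le h(t)$. Your write-up of the pigeonhole step is in fact more explicit than the paper's terse version, and your constant $C_6=C_2/16$ matches the stated $C_6=C_4/2$ (the paper's proof body says $C_2/2$, which is an internal inconsistency on its side, not yours).
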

\textbf{ Proof:} Let us consider a clean run. We can count the number of times the optimal action was not chosen and the variance term of the action is greater than $\Delta_{\min}$: 
\begin{align*}
	\left|\left\{ C_1 \sigma \sqrt{m \ln t}  A^\top(s) V^{\frac{1}{2}}(s)A(s) \geq \Delta_{\min} \right\}  \right| & \le \sum_{i \in [d]}\left|\left\{ i \in A(s), C_1 \sigma \sqrt{\frac{m \ln t}{N_i(s)}}  \geq \frac{\Delta_{\min}}{m} \right\}  \right| \\
	& \le d  \frac{C^{2}_1 \sigma^{2} m^3 \ln t}{\Delta^{2}_{\min}} 
\end{align*}
And since we have a clean run: $|\{s \in [t]:{A^\star}^\top \theta(s) \ge  {A^\star}^\top \mu^\star \}| \ge C_2 t^{\alpha}.$ At those times, if the variance term of the action played is less than $\Delta_{\min}$, then it means that the optimal action has been played due to the first condition of $\mathfrak{A}_{t}$. So we get :
\[
	M_{A^\star}(t) > C_2 t^{\alpha} - d  \frac{C^{2}_1 \sigma^{2} m^3 \ln t}{\Delta^{2}_{\min}} \ge C_6 t^{\alpha}
\]
With $C_6 = C_2/2$ for $t \ge \gls{P1}(m,d, \frac{1}{\Delta_{\min}}, \sigma) := \left(\frac{1}{\alpha}\right)^{1+\frac{2}{\alpha}} \left(1-\frac{1}{e}\right)^{-1/\alpha}\left(\frac{C^{2}_1}{C_2}\right)^{1+1/\alpha} \left(\frac{d \sigma^{2}m^3}{\Delta^{2}_{\min}}\right)^{1+1/\alpha}$ using lemma f. \ref{lemf:poweralphalog}. Recall that under $\mathfrak{A}_t$ :
\[
|{A^{\star}}^\top \theta(t) -{A^{\star}}^\top \mu^\star|  \leq C_1 \sigma \sqrt{m \ln t}A^\top V^{1/2}(s) A.
\]
Since $M_{A^\star}(t) \ge C_6 t^{\alpha}$  has been selected at least $C_6 t^{\alpha}$ times up to time $t$, we have ${A^\star}^\top V^{1/2} {A^\star} \le \frac{m}{\sqrt{C_6 t^{\alpha}}}$ so we get the announced result :
\[
	|{A^{\star}}^\top \theta(t) - {A^{\star}}^\top \mu^\star| \le   C_1 \sigma m \sqrt{\frac{m \ln t}{C_6 t^{\alpha}}}  = h(t)
\]

\subsection{Regret upper bound}
We can now analyze the regret. Let us define some more events at time $t$:
\begin{align*}
	\gls{evZt} & := \left\{\Delta(t) > 0\right\} & \gls{evHt} & := \left\{A^{\top}(t)(\theta(t)-\hat{\mu}(t)) > \sigma\sqrt{8\tilde{f}(t)A^{\top}(t)V(t)A(t)} \right\} \\
	\gls{evGt} &:= \left\{A^{\top}(t)(\theta(t)-\mu^{\star}) > \frac{3\Delta(t)}{4} \right\} & \gls{evFt} & :=  \left\{ \exists i \in A(t), \ \hat{\mu}_i(t)-\mu_{i}^{\star} > \frac{\Delta_{\min}}{4m}  \right\} 
\end{align*}
with $\gls{tildeft} := 2 \left( \ln\left(|\cA| t\right) + (m+2)(1+d\ln 2 ) \ln(\ln t) + \tfrac{m(1+d\ln 2)}{2}\ln\left(1 + e \right) \right).$

The event $\mathfrak{Z}_t$ means a suboptimal play. $\mathfrak{F}_t$ implies that the empirical mean of one of the items in the action selected at time $t$ deviates from its expectation. $\mathfrak{G}_t$ means that the Thompson sample of the decision played is far from its true value, and finally, $\mathfrak{H}_t$ is for when the Thompson sample from the arm played is far from its empirical mean. The complete event system decomposed as follows.

\subsubsection{Regret due to $ \widebar{\mathfrak{A}}_t$}
Using proposition \ref{prop:clean_run_high_prob} we have that $\PP(\widebar{\mathfrak{A}}_t) \le 4 d t^{-2} + t^{-1} (\ln t)^{-2} + e^{- C_4 t^\alpha}$. Then we can use the fact that $\sum_{t \in \NN^{\star}}\frac{1}{t^2} = \frac{\pi^2}{6}, \sum_{t \in \NN^{\star}}\frac{1}{t(\ln t)^2} < 4$. And furthermore, with lemma f. \ref{lemf:integraleseries} we have that $\sum_{t \in \NN^{\star}}e^{- C_4 t^\alpha} < \frac{C^{-1/\alpha}_4}{\alpha}\Gamma(\frac{1}{\alpha})$. Therefore, the regret caused by $\widebar{\mathfrak{A}}_t$ is upper bounded by: 
\[
	\sum_{t \in [T]} \EE[\Delta(t) \indic\left\{\widebar{\mathfrak{A}}_t \right\}] < \Delta_{\max}\sum_{t \in [T]}\PP(\widebar{\mathfrak{A}}_t) < \Delta_{\max} \left[ d \frac{2\pi^2}{3} + \frac{C^{-1/\alpha}_4}{\alpha}\Gamma\left(\frac{1}{\alpha}\right) + 4 \right]. 
\]

\subsubsection{Regret due to $\widebar{\mathfrak{G}}_t \cap \mathfrak{A}_t$}
We use proposition \ref{prop:clean_run1} and we get that for $t \ge P_1(m,d, 1/\Delta_{\min}, \sigma), {A^{\star}}^\top \theta(t) >  {A^{\star}}^\top \mu^\star - h(t)$.
Combining with event $\widebar{\mathfrak{G}}_t$, we know that when $h(t) < \Delta_{\min}/4$, the only action that can be played is the optimal one. We recall the formula of $h(t) =  C_1 \sigma m \sqrt{\frac{m \ln t}{C_6 t^{\alpha}}}$ And using lemma f. \ref{lemf:poweralphalog}, this happens for $t >  \gls{P2}(m, \frac{1}{\Delta_{\min}}, \sigma) := \left(\frac{1}{\alpha}\right)^{1+2/\alpha}\left(1-\frac{1}{e}\right)^{-1/\alpha}\left( \frac{16C^2_{1}}{C_6} \right)^{1+1/\alpha}  \left( \frac{\sigma^{2}m^3}{\Delta^{2}_{\min}} \right)^{1+1/\alpha}$
So the regret caused by this term is upper bounded by : 
\[
\sum_{t \in [T]} \EE[\Delta(t) \indic\left\{\widebar{\mathfrak{G}}_t \cap \mathfrak{A}_t\right\}] < \Delta_{\max} \max\left\{P_1\left(m,d, 1/\Delta_{\min}, \sigma \right), P_2\left(m, 1/\Delta_{\min}, \sigma \right) \right\}.
\]

\subsubsection{Regret due to $\mathfrak{F}_t$}
This result comes from lemma 2 from \cite{chen2013} and is reproduced here in lemma \ref{lem:concentraFT}. By setting $\epsilon = \frac{\Delta_{\min}}{4}$, we have 
\[\sum_{t \in [T]} \EE[\Delta(t) \indic\left\{\mathfrak{F}_t\right\}] < d\Delta_{\max}\left(\frac{32m^{2}\sigma^{2}}{\Delta^{2}_{\min}}\right).\]

\subsubsection{Regret due to $\mathfrak{H}_t$}
We show in lemma \ref{lem:concentraHT} in the appendix that $\PP(\mathfrak{H}_t) < \frac{1}{t^2}$ and $ \sum_{t \in [T]}\EE \left[\Delta(t)\indic\left\{\mathfrak{H}_t\right\} \right] < \Delta_{\max}\frac{\pi^2}{6}.$

\subsubsection{Regret due to $\widebar{\mathfrak{F}}_t \cap \mathfrak{G}_t \cap \widebar{\mathfrak{H}}_t$}
We use lemma \ref{lem:DeltatleEt} in this appendix, and we get with $\gls{C} = 768, \gls{Cprime} = 2304\ln 2$  that :
\[
    \sum\limits_{t \in [T]}\EE[\Delta(t) \indic \left\{\widebar{\mathfrak{F}}_t \cap \mathfrak{G}_t \cap \widebar{\mathfrak{H}}_t\right\}] \leq  \frac{384 \sigma^2 d \ln m\tilde{f}(T)}{\Delta_{\min}} \text{ and }
\]
$
\frac{384 \sigma^2 d \ln m\tilde{f}(T)}{\Delta_{\min}} <C \frac{\sigma^2d\ln m}{\Delta_{\min}}\ln T + C'\frac{\sigma^2 d^{2}m\ln m}{\Delta_{\min}} \ln \ln T + 1152 \frac{\sigma^2md^2\ln 2\ln(1+e)}{\Delta_{\min}}.
$ 
\subsubsection{Putting everything together}
Finally, we can put everything together and obtain the regret upper bound found in \ref{thm:regret_upper_bound} with the following polynomial constant term : 
\begin{align*}
	\gls{P}\left(m,d,\frac{1}{\Delta_{\min}}, \Delta_{\max}, \sigma\right) &= \Delta_{\max}\left[\frac{C^{-1/\alpha}_4}{\alpha}\Gamma\left(\frac{1}{\alpha}\right) + 4 \right] + d\Delta_{\max}\left(\frac{32m^{2}\sigma^{2}}{\Delta^{2}_{\min}} + \frac{5\pi^2}{3}\right)  \\
	&\hspace{-8em}+ \Delta_{\max} \left( P_1\left(m,d, \frac{1}{\Delta_{\min}}, \sigma\right) + P_2\left(m, \frac{1}{\Delta_{\min}}, \sigma\right) \right) + 1152 \frac{\sigma^2 md^2\ln 2\ln(1+e)}{\Delta_{\min}}.
\end{align*}
The degree of this polynomial depends on $1+1/\alpha < 10$ with $\alpha = 0.131$. So the degrees of the polynomial in $m,d,1/\Delta_{\min},\sigma$ are respectively $30,10,20,20$.

\section{Numerical experiments}\label{sec:NumericalExperiments}
In this section we perform numerical experiments with Beta CTS, BG-CTS and ESCB on a case where there are only two actions $\cA = \left\{A^1, A^2 \right\}$ of size $m = d/2$ with $A^1 = (1,...,1,0,...,0)$ and $A^2 = (0,...,0,1,...,1)$. This action set exhibited exponential regret in \cite{zhang2021suboptimality}. We set $\mu^{\star} = (0.7,...,0.7,0.9,...,0.9)$ with a Bernoulli distribution. The algorithm Beta CTS Uniform prior is initialized with the uniform distribution, while the Beta CTS Jeffreys is initialized with the Jeffreys prior on $[0,1]$, which puts more weight around the extremities of $[0,1]$, increasing exploration. 

\begin{figure}[h!]
    \centering    
    \begin{subfigure}[b]{0.5\textwidth}
    \includegraphics[width = \textwidth]{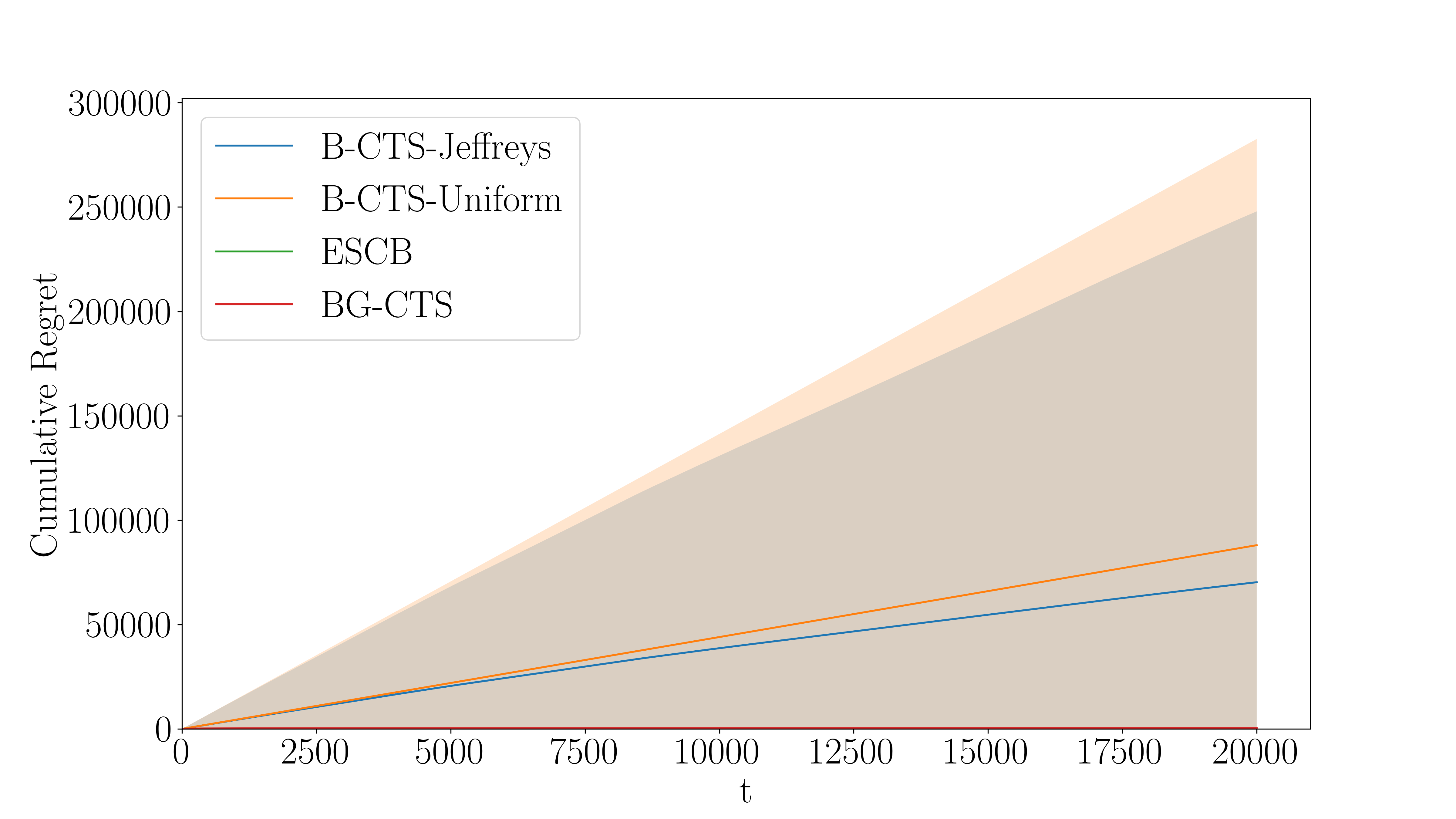}
    \caption{Average regret over time}
    \label{fig:regretT}
    \end{subfigure}%
    \begin{subfigure}[b]{0.5\textwidth}
    \includegraphics[width = \textwidth]{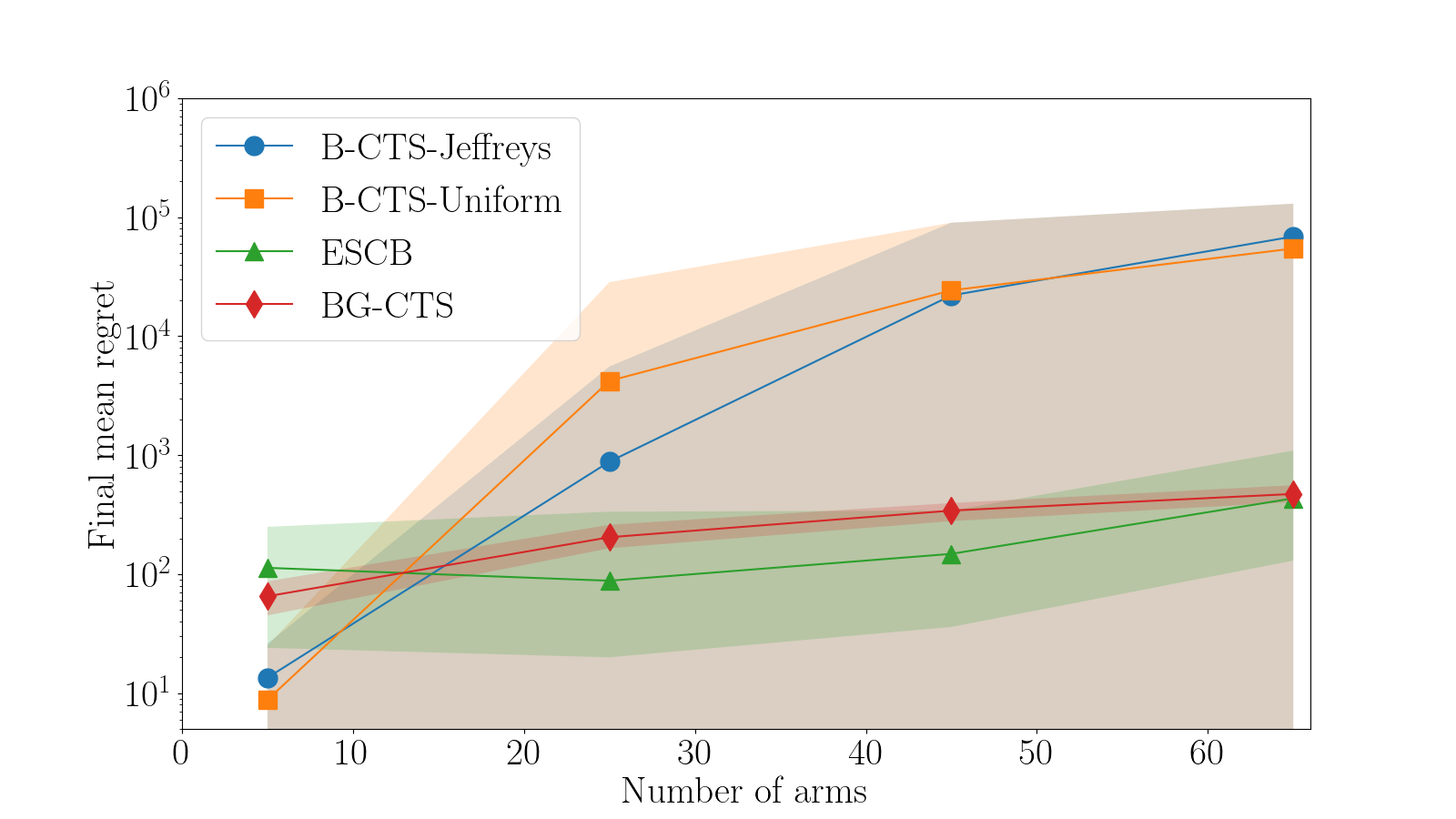}
    \caption{Average final regret as a function of $m$}
    \label{fig:finalregret}
    \end{subfigure}%
\end{figure}
In the first experiment, we set a time horizon of $T = 2 \times 10^4$, and each decision has $m = 50$ items. We run the experiment $100$ times and plot the average regret over time and two empirical standard deviations in Figure \ref{fig:regretT}. The regret is nearly linear for the Beta-based Thompson samplings, whereas the subgaussian Thompson sampling and ESCB showcase regret of magnitude much lower. 
We set a time horizon in the second experiment $T = 1 \times 10^4$. For each decision size $m \in \left\{5,25,45,65\right\}$, we run the experiments $150$ times, and we plot the final regret as a function of $m$ in Figure \ref{fig:finalregret}. In the Beta-based Thompson samplings, the final regret and its variance rapidly increase with $m$. In comparison, BG-CTS and ESCB do not seem to be affected. 


\section{Conclusion}\label{sec:conclusion}

We proposed a Boosted variance Gaussian Thompson Sampling for linear combinatorial bandits (BG-CTS) and proved using novel strategies that its regret is bounded polynomially. This variant of TS far outperforms the classical TS by several orders of magnitude on a $2$ decisions Bernoulli reward example. 

\section{Acknowledgements}
This paper is supported by the CHIST-ERA Wireless AI 2022 call MLDR project (ANR-23-CHR4-0005), partially funded by AEI and NCN under projects PCI2023-145958-2 and 2023/05/Y/ST7/00004, respectively.

\printbibliography
\appendix
\clearpage
\printglossaries

\onecolumn
\newpage

\section{Algorithm}

Here are the algorithm for Beta Combinatorial Thompson Sampling (Beta CTS) and Boosted Gaussian Combinatorial Thompson Sampling (BG-CTS) that we use in the paper.

\begin{algorithm}
    \DontPrintSemicolon
    \SetAlgoLined
    \caption{Beta Combinatorial Thompson Sampling (Beta CTS)  (with a uniform prior)}\label{alg:betaTS}
    \textbf{Initialization :} Uniform prior for beta distribution $\alpha(0) = \beta(0) \triangleq [1]^d $ ;

    \For{$t=1,...,T$}{
        Draw $\theta_i(t) \sim \text{Beta}(\alpha_i(t-1),\beta_i(t-1))$ 

        Compute $A(t) = \arg\max_{A \in \mathcal{A}} \{ A^T\theta(t) \}$ 

        The environment draws $X_i(t) \sim \text{Ber}(\mu^{*}_i)$ 

        Observe $X(t) \odot A(t)$, Receive reward $A(t)^{T}X(t)$ 
        
        Update priors $\alpha(t) = \alpha(t-1) + X(t) \odot A(s)$ and $\beta(t) = \beta(t-1) +  \left(A(t) - X(t) \odot A(t) \right)$ 
    }
        
\end{algorithm}

\begin{algorithm}
    \DontPrintSemicolon
    \SetAlgoLined
    \caption{Boosted Gaussian Combinatorial Thompson Sampling (BG-CTS) (with improper prior)}\label{alg:BG-CTS}
     \KwIn{$\lambda > 0, \sigma > 0$} 
     \textbf{Initialization :} $\forall i \in [d], N_i(0) = 0, \hat{\mu}_i(0) = 0$  Uniform improper distribution on $\RR^{d}$, select decisions until $\min_{i \in [d]} N_i(t) > 0$. Update $\forall i \in [d], N_i(0), \mu_i(0)$ accordingly. Generate $\forall t \in [T], \forall i \in [d], Z_i(t) \sim \cN(0,1)$ i.i.d. \;
    \For{$t=1,...,T$}{
    Compute $\theta_i(t) = \hat{\mu}_i(t-1) + \sigma\sqrt{\frac{2g(t)}{N_i(t-1)}} Z_i(t).$ \;

    Compute $A(t) = \arg\max_{A \in \mathcal{A}} \{ A^T\theta(t) \}.$ \;

    The environment draws $\forall i \in [d], X_i(t).$ \;

    Observe $X(t) \odot A(t)$, Receive reward $A(t)^{T}X(t).$ \;

    Update $\forall i \in A(t), N_i(t) = N_i(t-1)+1, \hat{\mu}_i(t) = \frac{N_i(t)-1}{N_i(t)}\mu_i(t-1) + \frac{X_i(t)}{N_i(t)}.$ \;

    }
    \end{algorithm}

\section{Proofs of main results}

Here in \ref{fig:diagramproof} is the diagram of the regret decomposition on the complete event system in green. In red is the novel part of the proof that we introduce. It replaces the step 4 of the proof in \cite{perrault2020} wich was inspired by \cite{wang2018} who we think addapted ideas from \cite{agrawal_analysis_2012} and \cite{kaufmann12a}. We, in some sense, rediscovered those ideas for the case of combinatorial bandits by controling the numbre of times the optimal action is played

\begin{sidewaysfigure}[]
    \centering
    \includegraphics[width=\textwidth,height=\textheight,keepaspectratio]{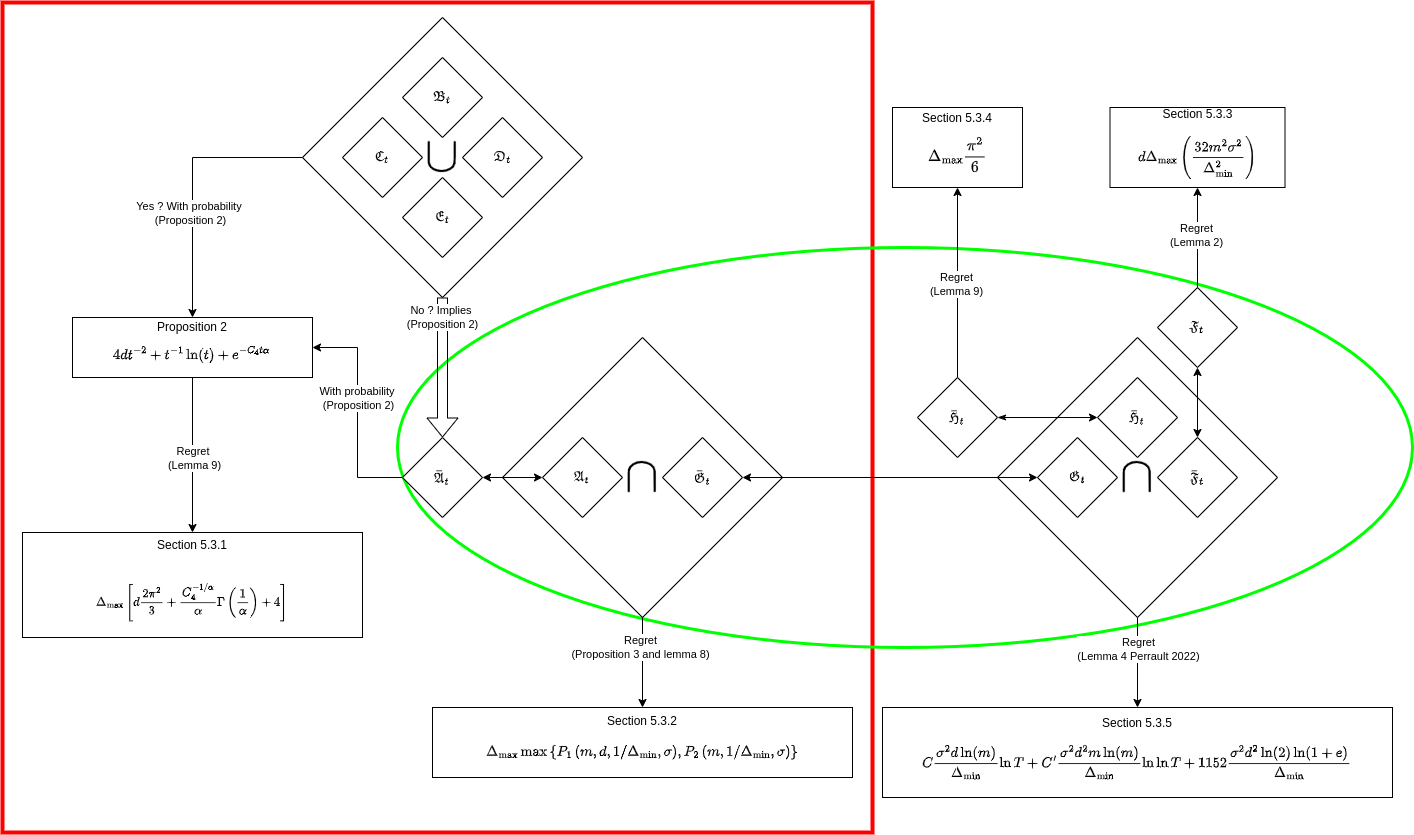}
    \caption{Diagram of the proof of the main result}
    \label{fig:diagramproof}
\end{sidewaysfigure}

\begin{lemma}
    \label{lem:DeltatleEt}
    We have : 
    \[
    \widebar{\mathfrak{F}}_t \cap \mathfrak{G}_t \cap \widebar{\mathfrak{H}}_t \subset \left\{ \Delta_{t} <  2 \sigma \sqrt{8 \tilde{f}(t)A^{\top}(t)V(t)A(t)} \right\}.
    \]
    And therefore (This is lemma 4 from \cite{perrault_when_2023}):
    \begin{align*}
        \sum\limits_{t \in [T]} \EE[\Delta_{t} \indic\left\{\widebar{\mathfrak{F}}_t \cap \mathfrak{G}_t \cap \widebar{\mathfrak{H}}_t \right\}] & \le \sum\limits_{t \in [T]} \EE\left[\Delta(t) \indic\left\{\Delta_{t} < 2 \sigma\sqrt{8 \tilde{f}(t)A^{\top}(t)V(t)A(t)}  \right\}\right]. \\
        &\leq 384\sigma^2\ln m\tilde{f}(T) \sum\limits_{i\in[d]} \frac{1}{\Delta_{i,\min}}.
    \end{align*}
    And 
    \[
    \frac{500 \sigma^2 d \ln m\tilde{f}(T)}{\Delta_{\min}} <C \frac{\sigma^2d\ln m}{\Delta_{\min}}\ln T + C'\frac{\sigma^2 d^{2}m\ln m}{\Delta_{\min}} \ln \ln T + 1152 \frac{md^2\ln 2\ln(1+e)}{\Delta_{\min}}
    \]
    with $C = 768, C' = 2304\ln 2$.   
\end{lemma}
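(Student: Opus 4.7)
The plan is in three stages: derive the set inclusion by decomposing $A^{\top}(t)(\theta(t)-\mu^{\star})$, invoke the pigeonhole-style sum bound of \cite{perrault_when_2023}, then simplify $\tilde{f}(T)$ to extract the $\ln T$ and $\ln\ln T$ asymptotics.

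For the inclusion, I would write
\[
A^{\top}(t)(\theta(t)-\mu^{\star}) = A^{\top}(t)(\theta(t)-\hat{\mu}(t)) + A^{\top}(t)(\hat{\mu}(t)-\mu^{\star}).
\]
On $\widebar{\mathfrak{F}}_t$, every coordinate $i\in A(t)$ satisfies $\hat{\mu}_i(t)-\mu_i^{\star}\le\Delta_{\min}/(4m)$, and since $\|A(t)\|_1\le m$ this yields $A^{\top}(t)(\hat{\mu}(t)-\mu^{\star})\le\Delta_{\min}/4\le\Delta(t)/4$. On $\widebar{\mathfrak{H}}_t$, the remaining summand is bounded by $\sigma\sqrt{8\tilde{f}(t)\,A^{\top}(t)V(t)A(t)}$. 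Combining with the defining inequality of $\mathfrak{G}_t$, namely $A^{\top}(t)(\theta(t)-\mu^{\star})>3\Delta(t)/4$, subtraction of $\Delta(t)/4$ from both sides gives $\Delta(t)/2<\sigma\sqrt{8\tilde{f}(t)\,A^{\top}(t)V(t)A(t)}$, which is exactly the claimed event.

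Once this inclusion is in hand, the quantity $\sum_{t\in[T]}\EE[\Delta(t)\indic\{\Delta(t)<2\sigma\sqrt{8\tilde{f}(t)A^{\top}(t)V(t)A(t)}\}]$ falls directly within the scope of Lemma 4 of \cite{perrault_when_2023} (the source of the $\ln m$ factor) after noting that $\tilde{f}$ is nondecreasing, so that each bonus term can be replaced by $\tilde{f}(T)$ before the pigeonhole-style aggregation across arms is applied. This yields the stated intermediate bound $384\sigma^{2}\ln m\,\tilde{f}(T)\sum_{i\in[d]}1/\Delta_{i,\min}$. I treat this step as a black box.

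Finally I would substitute the definition of $\tilde{f}(T)$, apply the trivial bounds $\ln|\cA|\le d\ln 2$ (because $\cA\subset\{0,1\}^d$) and $\sum_{i\in[d]}1/\Delta_{i,\min}\le d/\Delta_{\min}$, and group terms by their $T$-dependence. The $2\ln T$ piece of $\tilde{f}(T)$ produces the coefficient $768\sigma^{2}d\ln m/\Delta_{\min}=C\sigma^{2}d\ln m/\Delta_{\min}$; the $2(m+2)(1+d\ln 2)\ln\ln T$ piece produces a coefficient of $\ln\ln T$ which one bounds by $2304\ln 2\cdot\sigma^{2}d^{2}m\ln m/\Delta_{\min}=C'\sigma^{2}d^{2}m\ln m/\Delta_{\min}$; and all the $T$-independent contributions (from $\ln|\cA|$ and from $m(1+d\ln 2)\ln(1+e)/2$) are absorbed into the additive term $1152\sigma^{2}md^{2}\ln 2\ln(1+e)/\Delta_{\min}$. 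The conceptual content is entirely in the first stage; the only obstacle in the last step is careful bookkeeping of the universal constants, with no non-routine estimate required.
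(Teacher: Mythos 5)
Your proposal is correct and follows essentially the same route as the paper: the same three-term decomposition of $A^{\top}(t)(\theta(t)-\mu^{\star})$ on $\widebar{\mathfrak{F}}_t \cap \mathfrak{G}_t \cap \widebar{\mathfrak{H}}_t$, the same invocation of Lemma 4 of \cite{perrault_when_2023} (which the paper applies with $\beta_{i,T}=64\sigma^2\tilde f(T)$ and $p_i=1$ before bounding $3+\ln m$ by $6\ln m$), and the same final bookkeeping with $\ln|\cA|\le d\ln 2$ and $(m+2)(1+d\ln 2)<3dm\ln 2$. No substantive differences.
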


\begin{proof}
    $\mathfrak{G}_t$ implies that we have 
    \begin{align*}
    \frac{3\Delta(t)}{4} &\leq \sum\limits_{i\in A(t)}(\theta_{i}(t)-\mu_{i}^{\star})  \\
    &\leq \sum\limits_{i\in A(t)}(\theta_{i}(t)-\hat{\mu}_{i}(t)) + \sum\limits_{i\in A(t)}(\hat{\mu}_{i}(t-1) -\mu_{i}^{\star})  
    \end{align*}
    Then $\widebar{\mathfrak{F}}_t$ respectively $\widebar{\mathfrak{H}}_t$ imply  that $\sum\limits_{i\in A(t)}(\hat{\mu}_{i}(t) -\mu_{i}^{\star}) < \frac{\Delta_{t}}{4} $ respectively $\sum\limits_{i\in A(t)}(\theta_{i}(t)-\hat{\mu}_{i}(t)) < \sigma \sqrt{8\tilde{f}(t)A^{\top}(t)V(t)A(t)}$. Therefore, we have that : 
    \[
    \Delta(t) < 2 \sigma\sqrt{8\tilde{f}(t)A^{\top}(t)V(t)A(t)}
    \]
    Then because $\tilde{f}$ is increasing, we can use lemma 4 from \cite{perrault_when_2023} with $\beta_{i,T}= 64 \sigma^2 \tilde{f}(T)$. And we have that $\forall i, p_i = 1$ because we are not working on triggering bandits. ( $\beta_{i,T}$ and $p_i$ are from their notation, We can also notice that because each arm is played at least once in our algorithm we do not have the first term in $d \Delta_{\max}$ that is counted elsewhere) We have that :
    
    \begin{align*}
    & \hspace{-2em} \sum\limits_{t\in[T]} \EE[\Delta(t) \indic \left\{\widebar{\mathfrak{F}}_t \cap \mathfrak{G}_t \cap \widebar{\mathfrak{H}}_t\right\}] \\
    &\leq  \sum\limits_{t\in[T]}\EE\left[\Delta(t)\indic \left\{ \Delta(t) < 2 \sigma\sqrt{8\tilde{f}(T)A^{\top}(t)V(t)A(t) }\right\}\right]\\
    &\leq 64 \sigma^2   \tilde{f}(T) \sum\limits_{i\in[d]} \frac{3 + \ln(m)}{\Delta_{i,\min}} .
    \end{align*}

    To make the formula more readable we use that for $m>2$ we have that $5\ln m > 3$ so that : 

    $$ 64 \sigma^2   \tilde{f}(T) \sum\limits_{i\in[d]} \frac{3 + \ln(m)}{\Delta_{i,\min}}  <  384 \sigma^2   \tilde{f}(T) \sum\limits_{i\in[d]} \frac{\ln(m)}{\Delta_{i,\min}}$$

    By definition of $\tilde{f}$, we have that $C = 768$. And assuming $m \geq 1, d \geq 3$ we have that $(m+2)(1+d\ln2) < 3dm \ln 2$ so $C'$ can be taken as $2304\ln 2$ and the last constant is $1152\ln(2)$.
     
\end{proof}

\section{Concentration results}

\begin{lemma}
\label{lem:degeennesubgaussian}
For $t\geq2$, let $\lambda > 0$, let $\delta_t > 0$. Let $f(\tilde{\delta_t}) := \ln(\frac{1}{\delta_t}) + m \ln\ln t + \frac{m}{2}\ln\left(1+\frac{e}{\lambda}\right)$. Then the event $\mathfrak{D}_t = \{ \max_{s \in [t]}  U^\star(s)  \ge \sigma \sqrt{2(1+\lambda)f(\delta_t)} \}$  happen with probability $\PP(\mathfrak{D}_t) < \delta_{t}$

\end{lemma}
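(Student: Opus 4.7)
My plan is to prove this by the classical method of mixtures combined with Ville's maximal inequality, in the style of the self-normalized toolbox of \cite{degenne2016}. The first step is to observe that, for every deterministic $c\in\RR^d$ supported on $A^\star$, the process
\[
	M_s(c) := \exp\!\Bigl(\sum_{i\in A^\star} c_i N_i(s)(\hat\mu_i(s)-\mu_i^\star) - \tfrac{\sigma^2}{2}\sum_{i\in A^\star} c_i^2 N_i(s)\Bigr)
\]
is a non-negative $(\cH(s))$-supermartingale with $\EE M_s(c)\le 1$. This is an arm-by-arm MGF computation that uses $\sigma^2$-subgaussianity of $X_i(\tau)-\mu_i^\star$, the $\cH(\tau)$-predictability of $A(\tau)$, and within-round independence of the rewards across arms.

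Next I would mix $M_s(c)$ against a centered Gaussian product prior $p$ on the $m$ coordinates of $A^\star$ of covariance $(\lambda\sigma^2)^{-1}I$; by Fubini the mixed process $\bar M_s := \int M_s(c)\,dp(c)$ is again a non-negative supermartingale with $\EE\bar M_s\le 1$, and a coordinate-wise Gaussian integral evaluates it in closed form as
\[
	\bar M_s \;=\; \prod_{i\in A^\star}\frac{1}{\sqrt{1+N_i(s)/\lambda}}\;\exp\!\Bigl(\sum_{i\in A^\star}\frac{N_i(s)(\hat\mu_i(s)-\mu_i^\star)^2}{2\sigma^2\bigl(1+\lambda/N_i(s)\bigr)}\Bigr).
\]
Ville's inequality then yields $\PP\bigl(\sup_{s\le t}\bar M_s \ge 1/\delta_t\bigr)\le \delta_t$, and on the complementary event taking logarithms and bounding $1+\lambda/N_i(s)\le 1+\lambda$ (since $N_i(s)\ge 1$) gives the ellipsoidal self-normalized inequality
\[
	\sum_{i\in A^\star} N_i(s)(\hat\mu_i(s)-\mu_i^\star)^2 \;\le\; 2\sigma^2(1+\lambda)\Bigl(\ln(1/\delta_t)+\tfrac{1}{2}\sum_{i\in A^\star}\ln\bigl(1+N_i(s)/\lambda\bigr)\Bigr).
\]
I would then convert this into a directional bound on $U^\star(s)$ by Cauchy--Schwarz in the form $\bigl(\sum_{i\in A^\star}(\hat\mu_i-\mu_i^\star)\bigr)^2 \le \bigl(\sum_{i\in A^\star} 1/N_i(s)\bigr)\bigl(\sum_{i\in A^\star} N_i(s)(\hat\mu_i-\mu_i^\star)^2\bigr)$, which after dividing by $A^{\star\top}V(s)A^\star$ yields $U^\star(s)^2 \le 2\sigma^2(1+\lambda)(\cdots)$ and reproduces the $(1+\lambda)$ factor of the statement.

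The hard part will be sharpening the log-determinant penalty. My bound carries $\tfrac{1}{2}\sum_{i\in A^\star}\ln(1+N_i(s)/\lambda)\le \tfrac{m}{2}\ln(1+t/\lambda)$, which grows like $\ln t$, whereas the lemma asks for the much tighter $m\ln\ln t+\tfrac{m}{2}\ln(1+e/\lambda)$. Closing this gap is the combinatorial bookkeeping of \cite{degenne2016}: I would partition the range $[1,t]$ of each count into $\lceil\log_2 t\rceil+1$ geometric brackets and, for every tuple of brackets assigned to the $m$ arms of $A^\star$, apply the mixture bound above with the Laplace scale tuned to the bracket width and with failure probability $\delta_t/(\lceil\log_2 t\rceil+1)^m$. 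The union bound over the $\sim(\log_2 t)^m$ combinations contributes the $m\ln\ln t$ term, while the within-bracket scale choice absorbs the bracket width into the constant $\tfrac{m}{2}\ln(1+e/\lambda)$. The mechanics are conceptually just peeling, but tracking the constants carefully so that the exact form of the lemma is recovered is where the delicacy lies.
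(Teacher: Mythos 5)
Your proposal is correct in substance, but note that the paper does not actually prove this lemma: it is imported from Lemma 3 of \cite{degenne2016}, and the paper's ``proof'' is a citation plus the remark that the original argument (their Lemma 6) proceeds by a union bound over the possible values of the counts $(N_i)_{i\in A^\star}$ (their set $\cD_a$), with a per-cell Chernoff bound on the sub-chi-squared statistic $\sum_{i\in A^\star}N_i(s)(\hat\mu_i(s)-\mu_i^\star)^2/(2\sigma^2)$. Your reconstruction is a near-equivalent route: the Gaussian-mixture supermartingale plus Ville's inequality plays the role of their per-cell Chernoff bound, your Cauchy--Schwarz step is exactly the standard reduction from the ellipsoidal bound to the directional quantity $U^\star(s)$ appearing in $\mathfrak{D}_t$, and the peeling over count brackets is where both arguments turn the naive $\tfrac{m}{2}\ln(1+t/\lambda)$ penalty into $m\ln\ln t+\tfrac{m}{2}\ln(1+e/\lambda)$. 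Two bookkeeping corrections are needed to land the exact constants: with dyadic brackets and confidence split $\delta_t/(\lceil\log_2 t\rceil+1)^m$ you obtain $m\ln(\log_2 t+1)$, which strictly overshoots $m\ln\ln t$; take instead brackets of ratio $e$ (at most $\lceil\ln t\rceil$ per arm) and, on the bracket $[e^{k},e^{k+1})$ for arm $i$, set the prior precision to $\lambda\sigma^2 e^{k}$, so that $1+\lambda e^{k}/N_i(s)\le 1+\lambda$ and $\tfrac{1}{2}\ln\bigl(1+N_i(s)/(\lambda e^{k})\bigr)\le\tfrac{1}{2}\ln(1+e/\lambda)$, which yields precisely the $(1+\lambda)$ factor and the $\tfrac{m}{2}\ln(1+e/\lambda)$ term. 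It is also worth stating explicitly that the bracket restrictions are used only to simplify the closed form of the mixed supermartingale after Ville's inequality is applied (the supermartingale property holds at all times), which is what legitimizes the peeling. With those adjustments your sketch is a complete proof of the cited result; as written, the final step is a plan whose missing content is constant-tracking rather than a missing idea.
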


\begin{proof}
    See lemma 3 of \cite{degenne2016}. Moreover, notice that in their lemma 6, they use a union bound on all the possible values of $(N_i)_{i \in A^{\star}} \in [t]^{m}$ to get their result. (This is symbolized by their set $\cD_a$)
\end{proof}

\begin{lemma}[Multiplicative Azuma Chernoff]
\label{lem:azumachernof}
Let $(W_{t})_{t\in \NN^{\star}}$ be a sum of random variable : $\forall t \in \NN^{\star}, W_{t}= \sum\limits_{s = 1}^{t} X_{s}$ Where $(X_{t})_{t\in N^{\star}}$ verify that there exist $(p_t)_{t\in N}  \in ]0,1[$ such that :  

\begin{align*}
\forall t \in N^{\star},& \ \PP(X_{t} = 1| \cH(t-1)) = p_{t}, \\
& \ \PP(X_{t} = 0| \cH(t-1)) = 1-p_t.
\end{align*}
And $(\cH(t))_{t \in \NN^{\star}}$ is a filtration where $\forall t \in \NN^{\star}, X_{t}$ is $\cH(t)$ measurable. We note $m_{t} = \sum\limits_{t=1}^{t} p_{t^{'}}.$ 

Let $t \in \NN^{\star}, \forall \delta > 0$ we have that :

\[
\PP\left(W_{t} \geq (1+\delta) m_{t} \right) < \exp(\frac{\delta^{2}m_{t}}{2+\delta})
\]
and $\forall \ 1 > \delta \geq 0$

\[
\PP\left(W_{t} \leq (1-\delta) m_{t} \right) < \exp(\frac{\delta^{2}m_{t}}{2})
\]
    
\end{lemma}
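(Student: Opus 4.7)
The plan is to adapt the standard Chernoff bound for independent Bernoulli variables to this conditionally-Bernoulli martingale setting by using the tower property of conditional expectation, and then to convert the resulting Chernoff-Cramér bound into the cleaner form stated in the lemma via a standard algebraic inequality.

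First I would set up the exponential martingale. For any $\lambda > 0$, using the conditional distribution of $X_s$ given $\cH(s-1)$,
\begin{equation*}
\EE[e^{\lambda X_s} \mid \cH(s-1)] \;=\; p_s e^{\lambda} + (1-p_s) \;\leq\; e^{p_s(e^{\lambda}-1)},
\end{equation*}
where the inequality uses $1+x \leq e^x$ with $x = p_s(e^\lambda-1)$. Iterating this bound via the tower property (conditioning successively on $\cH(t-1), \cH(t-2), \ldots, \cH(0)$), and using that $X_s$ is $\cH(s)$-measurable,
\begin{equation*}
\EE[e^{\lambda W_t}] \;\leq\; \exp\!\left( (e^{\lambda}-1) \sum_{s=1}^{t} p_s \right) \;=\; \exp\!\left( (e^{\lambda}-1) m_t \right).
\end{equation*}

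Next I would apply Markov's inequality to $e^{\lambda W_t}$ and optimize over $\lambda$. For the upper tail, for any $\lambda > 0$,
\begin{equation*}
\PP\bigl(W_t \geq (1+\delta) m_t\bigr) \;\leq\; e^{-\lambda(1+\delta)m_t} \EE[e^{\lambda W_t}] \;\leq\; \exp\!\bigl( m_t [(e^{\lambda}-1) - \lambda(1+\delta)] \bigr).
\end{equation*}
The right-hand side is minimized by $\lambda = \ln(1+\delta)$, yielding the classical multiplicative Chernoff bound
\begin{equation*}
\PP\bigl(W_t \geq (1+\delta) m_t\bigr) \;\leq\; \exp\!\bigl( -m_t \bigl[(1+\delta)\ln(1+\delta) - \delta\bigr] \bigr).
\end{equation*}
The announced form then follows from the elementary inequality $(1+\delta)\ln(1+\delta) - \delta \geq \tfrac{\delta^2}{2+\delta}$ valid for all $\delta > 0$ (proved by noting both sides vanish at $\delta=0$ and comparing derivatives).

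For the lower tail I would repeat the argument with $\lambda < 0$: the same conditional moment bound gives $\EE[e^{-\lambda W_t}] \leq \exp((e^{-\lambda}-1)m_t)$, and Markov applied to $e^{-\lambda W_t}$ followed by optimizing at $\lambda = -\ln(1-\delta)$ (for $0 \leq \delta < 1$) gives the bound $\exp(-m_t[(1-\delta)\ln(1-\delta)+\delta])$, which is in turn upper bounded by $\exp(-\delta^2 m_t/2)$ via the standard inequality $(1-\delta)\ln(1-\delta) + \delta \geq \delta^2/2$. The only genuinely non-mechanical step is the iterated conditional moment bound — everything afterward is the textbook Chernoff-on-Bernoullis calculation — so I expect no real obstacle here beyond being careful that $X_s$ is indeed $\cH(s)$-measurable so the tower step goes through cleanly.
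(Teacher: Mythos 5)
Your proposal is correct and follows essentially the same route as the paper's proof: the conditional exponential moment bound $\EE[e^{\lambda X_s}\mid \cH(s-1)] \le e^{p_s(e^\lambda-1)}$ iterated via the tower property, Markov's inequality, the optimal choices $\lambda = \ln(1+\delta)$ and $\lambda = -\ln(1-\delta)$, and the same two elementary inequalities to reach the stated forms.
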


\begin{proof}
\textbf{For the first inequality}, let $t \in \NN^{\star}, \lambda,x \in \RR^{+}$, we have by Markov inequality:

\begin{align*}
\PP(W_{t}  \geq x) &= \PP(\lambda W_{t}  \geq \lambda x)\\
 &= \PP(e^{\lambda W_{t}} \geq e^{\lambda x})\\
&\leq \EE(e^{\lambda W_{t}})e^{-\lambda x}
\end{align*}
Let $s \in [t]$, by definition of $X_{s}$ and using that $\forall x \in \RR, \exp(x) \geq  1+x$ we have that : 
\begin{align*}
\EE[e^{\lambda X_{s}}| \cH(s -1)] &=  p_{s}e^{\lambda}+ (1-p_{s})\\
&= p_{s}(e^{\lambda}-1) + 1\\
&\leq \exp(p_{s}(e^\lambda-1)).
\end{align*}
Then, by the tower property of the expectation and induction:
\begin{align*}
\EE(e^{\lambda W_{t}}) &= \EE\left[e^{\lambda W_{t-1}} \EE\left[e^{\lambda X_{t}}|\cH_{t-1}\right]\right]\\
& \leq \EE\left[e^{\lambda W_{t-1}} \right] \exp(p_{s}(e^\lambda-1))\\
& \leq \exp \left(\sum\limits_{s \in [t]} p_{s} (e^{\lambda}-1)\right).
\end{align*}
Combining with the first equation, we have :
\[
\PP(W_{t}  > x) < \exp \left(m_{t} (e^{\lambda}-1) -\lambda x\right)
\]
This is true for all $x, \lambda \in \RR^{+}$ so by setting $\lambda = \ln(1 + \delta)$ and $x = (1+\delta) m_{t}$ we have :

\begin{align*}
\PP(W_{t}  > (1+\delta)m_{t}) &< \exp \left(m_{t} (e^{\ln(1+\delta)}-1) -(1+\delta)\ln(1+\delta) m_{t}\right)\\
& \leq \exp(m_{t} (\delta - (1+\delta)\ln(1+\delta)))\\
& \leq \exp(-\frac{\delta^{2}m_{t}}{2+\delta})
\end{align*}
Because $\forall\delta>0, \delta - (1+\delta)\ln(1+\delta) < \frac{-\delta^{2}}{2+\delta}$.

\textbf{For the second inequality}, let $\lambda \in \RR^{+}$:
\begin{align*}
\PP(W_{t}  < x) &= \PP( -\lambda W_{t}  > -\lambda x)\\
 &= \PP(e^{-\lambda W_{t}} >e^{-\lambda x})\\
&\leq \EE(e^{-\lambda W_{t}})e^{\lambda x}
\end{align*}
Let $s \in [t]$, by definition of $X_{s}$ we have that : 
\begin{align*}
\EE[e^{-\lambda X_{s}}| \cH(s-1)] &=  p_{s}e^{-\lambda}+ (1-p_{s})\\
&= p_{s}(e^{-\lambda}-1) + 1\\
&\leq \exp(p_{s}(e^{-\lambda}-1)) 
\end{align*}
Then, by the tower property of the expectation and induction:
\begin{align*}
\EE(e^{-\lambda W_{t}}) &= \EE\left[e^{-\lambda W_{t-1}} \EE\left[e^{-\lambda X_{t}}|\cH_{t-1}\right]\right]\\
& \leq \EE\left[e^{-\lambda W_{t-1}} \right] \exp(p_{s}(e^{-\lambda}-1))\\
& \leq \exp \left(\sum\limits_{s \in [t]} p_{s} (e^{-\lambda}-1)\right) 
\end{align*}
Combining with the first equation, we have :
\[\PP(W_{t}  < x) < \exp \left(m_{t} (e^{-\lambda}-1) +\lambda x\right)\]
This is true for all $x, \lambda \in \RR^{+}$ so by setting $\lambda = -\ln(1 - \delta)$ and $x = (1-\delta) m_{t}$ we have :
\begin{align*}
\PP(W_{t}  < (1-\delta)m_{t}) &< \exp \left(m_{t} (e^{\ln(1-\delta)}-1) -(1-\delta)\ln(1-\delta) m_{t}\right)\\
& \leq \exp(m_{t} (-\delta - (1-\delta)\ln(1-\delta)))\\
& \leq \left( \frac{e^{-\delta}}{(1-\delta)^{(1-\delta)}} \right)^{m_{t}}\\
& \leq e^{-\frac{\delta^{2}m_{t}}{2}}
\end{align*}

\end{proof}

\begin{lemma}
\label{lem:concentraBT}
Let $t \in [T]$, let us consider the event :
\[
\mathfrak{B}(t) := \left\{ \max_{s \in [t]} \|V^{\frac{1}{2}}(s) \left(\mu^{\star}-\hat{\mu}(s) \right) \|_{\infty} > \sigma\sqrt{8 \ln t} \right\}.
\]
We have that $\PP(\mathfrak{B}(t)) < \frac{d}{t^{2}}.$
This result can be in part found in \cite{kveton2014tight} in their proof of lemma 1.
\end{lemma}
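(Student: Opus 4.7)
The plan is to reduce to a univariate concentration bound for each arm and handle the random sample count $N_i(s)$ via the standard decoupling-plus-peeling argument used in the analysis of UCB-style algorithms.

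First, I would unwrap the infinity norm with a union bound over coordinates:
\[
\PP(\mathfrak{B}(t)) \le \sum_{i \in [d]} \PP\Big(\max_{s \in [t]} \sqrt{V_i(s)}\,|\mu_i^\star - \hat\mu_i(s)| > \sigma\sqrt{8\ln t}\Big),
\]
so it suffices to control each arm in isolation. Here $V_i(s) = 1/N_i(s)$ and $\hat\mu_i(s)$ is the empirical mean of the $N_i(s)$ observations of arm $i$ collected before time $s$.

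Second, I would decouple the random, history-adapted sample count $N_i(s)$ from the rewards. Let $(X_i^{(k)})_{k \ge 1}$ denote the i.i.d.\ $\sigma^2$-subgaussian rewards of arm $i$ in observation order, and define the deterministic partial averages $\tilde\mu_{i,n} := n^{-1}\sum_{k=1}^n X_i^{(k)}$. Whenever $N_i(s) \ge 1$ one has $\hat\mu_i(s) = \tilde\mu_{i,N_i(s)}$, and since the initialization phase of BG-CTS guarantees $N_i(s) \in \{1,\ldots,t\}$ thereafter, the maximum over $s$ is dominated by a maximum over every possible value $n \in [t]$. This replaces the stochastic index by a deterministic one amenable to a union bound.

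Third, I would apply a union bound over $n \in [t]$ together with the standard subgaussian Hoeffding tail,
\[
\PP\big(|\tilde\mu_{i,n} - \mu_i^\star| > \epsilon\big) \le 2\exp\!\big(-n\epsilon^2/(2\sigma^2)\big),
\]
substitute the threshold implied by the event $\mathfrak{B}(t)$ (so that each term becomes at most $2/t^{4}$), and sum over $n \in [t]$ and then over $i \in [d]$. The resulting bound is of order $d/t^{3}$, which is strictly smaller than the announced $d/t^2$ for $t$ at least moderately large, so the claim follows with slack.

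The main (and really only) subtle point is the second step: legitimately replacing the random, adapted index $N_i(s)$ by a deterministic $n$ and passing to a maximum over all possible sample sizes. This is the cornerstone of Hoeffding-based UCB analyses and is borrowed essentially verbatim from the analysis of CUCB in \cite{kveton2014tight}; once the decoupling is in place, the first and third steps reduce to writing down standard inequalities.
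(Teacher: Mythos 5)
Your proof is correct and follows essentially the same route as the paper's: a union bound over the $d$ arms, reduction of the adapted count $N_i(s)$ to a deterministic sample size via the i.i.d.\ reward sequence, and the subgaussian Hoeffding tail with threshold $\sigma\sqrt{8\ln t/n}$. The only difference is that you union-bound over the $t$ possible counts $n$ alone, whereas the paper union-bounds over all pairs $(s,n)$ with $s\in[t]$, $n\in[s]$, so your bound is in fact slightly tighter (of order $d/t^{3}$ versus the paper's $2d/t^{2}$), which still implies the stated inequality.
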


\begin{proof}
We control it with the deviation of individual arms.
\[
\mathfrak{B}_{i}(t) := \left\{ \exists s \in [t], | \mu_{i}^{\star}-\hat{\mu}_{i}(s) |  > \sigma \sqrt{\frac{8 \ln t}{N_{i}(s)}} \right\}
\]
We have by a double union bound, and because the rewards are $\sigma^{2}$ subgaussian using Hoeffding:
\[
\begin{array}{ll}
\PP(\mathfrak{B}_{i}(t)) & = \sum\limits_{s \in [t]} \sum\limits_{n \in [s]} \PP(| \mu_{i}^{\star} - \hat{\mu}_{i}(s) | > \sigma \sqrt{\frac{8 \ln t}{n}}, N_{i}(s) = n ) \\
& < \sum\limits_{s \in [t]}\sum\limits_{n \in [s]} \PP(| \sum\limits_{k \in [n]}\left(X_i(k)- \mu^{\star}_i \right) | > \sigma \sqrt{8 n \ln t})\\
& < 2t^{2} \exp(-4\ln t)\\
& < \frac{2}{t^{2}}
\end{array}
\]
We have that $\mathfrak{B}(t) \subset \bigcup_{i\in d}\cM_{i}(t)$. So by union bound $\PP(\mathfrak{B}(t)) < \frac{2d}{t^{2}}$.
\end{proof}



\begin{lemma}
    \label{lem:concentraFT}
    Let $i \in [d], t \in [T]$, let us consider the event :
    \[
    \mathfrak{F}(t) :=  \left\{ \exists i \in A(t), \ \hat{\mu}_i(t)-\mu_{i}^{\star} > \frac{\Delta_{\min}}{2m} - \frac{\epsilon}{m} \right\} 
    \]
    We have : 
    \[
    \EE \left[\sum\limits_{t \in [T]}\indic\left\{ i \in A(t), \hat{\mu}_i(t)-\mu_{i}^{\star} > \frac{\Delta_{\min}}{2m}  - \frac{\epsilon}{m}\right\} \right] <  \frac{8m^{2}\sigma^{2}}{(\Delta_{\min} - 2\epsilon)^{2}}
    \]
    And :
    \[ 
    \EE \left[\sum\limits_{t \in [T]}\Delta(t)\indic\left\{\mathfrak{F}(t)\right\} \right] < d\Delta_{\max}\left(\frac{8m^{2}\sigma^{2}}{(\Delta_{\min} - 2\epsilon)^{2}}\right)
    \]
    This is a result from \cite{chen2013} lemma 2 adapted to the $\sigma^{2}$ subgaussian case.
\end{lemma}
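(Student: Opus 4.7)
The plan is a textbook counting/Hoeffding argument, with only the subgaussian Hoeffding inequality replacing the bounded-reward Hoeffding inequality of Chen et al.~\cite{chen2013}. Fix an arm $i \in [d]$, and write $\delta := \tfrac{\Delta_{\min}}{2m} - \tfrac{\epsilon}{m} = \tfrac{\Delta_{\min} - 2\epsilon}{2m}$ for the threshold appearing in the indicator. The key structural observation is that the indicator $\indic\{i \in A(t),\,\hat{\mu}_i(t) - \mu_i^\star > \delta\}$ can be nonzero only at the times $t$ when arm $i$ is actually pulled. At such times the value of $N_i(t)$ is distinct (it is incremented by $1$), so re-indexing the sum by $n = N_i(t)$ gives
\[
\sum_{t \in [T]} \indic\!\left\{ i \in A(t),\, \hat{\mu}_i(t) - \mu_i^\star > \delta \right\} \;\le\; \sum_{n=1}^{T} \indic\!\left\{ \bar{X}_i^{(n)} - \mu_i^\star > \delta \right\},
\]
where $\bar{X}_i^{(n)}$ denotes the empirical mean of the first $n$ i.i.d. samples of arm $i$.

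The next step is to take expectations and apply Hoeffding's inequality for $\sigma^2$-subgaussian rewards to each term, which yields $\PP(\bar{X}_i^{(n)} - \mu_i^\star > \delta) \le \exp\!\left(-\tfrac{n\delta^2}{2\sigma^2}\right)$. I then bound the resulting geometric sum by the corresponding integral:
\[
\sum_{n=1}^{T} \exp\!\left(-\tfrac{n\delta^2}{2\sigma^2}\right) \;\le\; \int_{0}^{\infty} \exp\!\left(-\tfrac{u \delta^2}{2\sigma^2}\right) du \;=\; \frac{2\sigma^2}{\delta^2}.
\]
Substituting $\delta^2 = (\Delta_{\min} - 2\epsilon)^2/(4m^2)$ gives $\tfrac{2\sigma^2}{\delta^2} = \tfrac{8 m^2 \sigma^2}{(\Delta_{\min} - 2\epsilon)^2}$, which is exactly the claimed first inequality.

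For the second inequality, I use a union bound over the $d$ arms to handle $\mathfrak{F}(t) = \{\exists i \in A(t),\, \hat{\mu}_i(t) - \mu_i^\star > \delta\}$, bound the instantaneous regret by $\Delta_{\max}$, and combine with the first inequality applied to each $i$:
\[
\EE\!\left[ \sum_{t \in [T]} \Delta(t)\,\indic\{\mathfrak{F}(t)\} \right] \;\le\; \Delta_{\max} \sum_{i \in [d]} \EE\!\left[ \sum_{t \in [T]} \indic\{i \in A(t), \hat{\mu}_i(t) - \mu_i^\star > \delta\} \right] \;<\; d\,\Delta_{\max}\,\frac{8 m^2 \sigma^2}{(\Delta_{\min} - 2\epsilon)^2}.
\]
There is no substantive obstacle here: the only non-cosmetic departure from~\cite{chen2013} is swapping the $[0,1]$-bounded Hoeffding inequality for its $\sigma^2$-subgaussian counterpart, which has the same exponential form up to the variance proxy. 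The mild care needed is in the indexing convention ($\hat{\mu}_i(t)$ uses the $N_i(t) \leq t-1$ samples collected before time $t$), but this only shifts the index by one and is absorbed by the integral bound above.
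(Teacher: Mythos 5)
Your proposal is correct and follows essentially the same route as the paper's proof: re-index the sum over times when arm $i$ is pulled by $n = N_i$, apply the $\sigma^2$-subgaussian Hoeffding bound, control the resulting geometric sum by $2\sigma^2/\delta^2$ (the paper sums the series exactly and uses $e^x - 1 \ge x$ where you use an integral comparison, which gives the identical constant), and finish with a union bound over arms and the crude bound $\Delta(t) \le \Delta_{\max}$. No gaps.
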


\begin{proof}
We use a union bound and Hoeffding's inequality for $\sigma^{2}$ subgaussian random variable. Let $i \in [d]$, 

\begin{align*}
    & \hspace{-2em}\EE \left[\sum\limits_{t \in [T]}\indic\left\{ i \in A(t), \hat{\mu}_i(t)-\mu_{i}^{\star}  > \frac{\Delta_{\min}}{2m} - \frac{\epsilon}{m} \right\} \right] \\
    <& \EE \left[\sum\limits_{t=1}^{\infty}\indic\left\{ i \in A(t), \hat{\mu}_i(t)-\mu_{i}^{\star} > \frac{\Delta_{\min}}{2m} - \frac{\epsilon}{m}\right\} \right] \\
    <& \EE \left[\sum\limits_{n=1}^{\infty}\indic\left\{\hat{\mu}_i-\mu_{i}^{\star} > \frac{\Delta_{\min}}{2m}- \frac{\epsilon}{m}, N_{i} = n \right\} \right] \\
    <& \sum\limits_{n=1}^{\infty}\PP \left( \hat{\mu}_i-\mu_{i}^{\star} > \frac{\Delta_{\min}}{2m}- \frac{\epsilon}{m}, N_{i} = n \right)\\
    <&\sum\limits_{n=1}^\infty \exp\left(- \frac{n}{2\sigma^{2}} \left(\frac{\Delta_{\min}}{2m}- \frac{\epsilon}{m}\right)^{2}\right)\\
    <&\frac{\exp\left(- \frac{1}{2\sigma^{2}} \left(\frac{\Delta_{\min}}{2m}- \frac{\epsilon}{m}\right)^{2}\right)}{1 - \exp\left(- \frac{1}{2\sigma^{2}} \left(\frac{\Delta_{\min}}{2m}- \frac{\epsilon}{m}\right)^{2}\right)} \\
    < &\frac{8m^{2}\sigma^{2}}{\left(\Delta_{\min} - 2\epsilon\right)^{2}}.
\end{align*}
    Then decomposing $\mathfrak{F}(t)$ with a union bound of the $\left\{ i \in A(t), \hat{\mu}_i(t)-\mu_{i}^{\star} > \frac{\Delta_{\min}}{2m}  - \frac{\epsilon}{m}\right\}$ and summing over $i \in [d]$ we get the second result.
    
\end{proof}

\begin{lemma}
    \label{lem:concentraHT}
    Let $t \in [T]$ we have :
\[
\PP\left(A^{\top}(t)(\theta(t)-\hat{\mu}(t)) \geq \sigma\sqrt{8g(t)\ln\left(|\cA| t\right) A^{\top}(t)V(t)A(t)} | \cH(t)\right)< \frac{1}{t^2}
\]
Which implies that  $\PP( \mathfrak{H}(t) | \cH(t)) < \frac{1}{t^2}$.

Defining : 
\[
\tilde{f}(t) := (1+\lambda) \left( \ln\left(|\cA| t\right) + (m+2)(1+d\ln(2)) \ln(\ln t) + \tfrac{m(1+d\ln(2))}{2}\ln\left(1 + \frac{e}{\lambda}\right) \right),
\]
we have that $g(t)\ln(|\cA|t) < \tilde{f}(t)$ and thus:
\[
\PP\left(A^{\top}(t)(\theta(t)-\hat{\mu}(t)) \geq \sigma\sqrt{8 \tilde{f}(t) A^{\top}(t)V(t)A(t) }| \cH(t)\right)< \frac{1}{t^2}
\]
Finally:
\[
    \EE \left[\sum\limits_{t \in [T]}\Delta(t)\indic\left\{\mathfrak{H}(t)\right\} \right] < \Delta_{\max}\frac{\pi^2}{6}
\]

\end{lemma}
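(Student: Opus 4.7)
The plan is to exploit the decomposition $\theta(t) = \hat{\mu}(t) + \sigma\sqrt{2g(t)}\, V^{1/2}(t) Z(t)$ stated in the algorithmic setup, where $Z(t) \sim \cN(0, I_d)$ is independent of $\cH(t)$. Since $A(t)$ is a random action in $\cA$ which itself depends on $Z(t)$, one cannot condition on $A(t)$ being fixed; instead I will first bound the deviation for every fixed $A \in \cA$, then take a union bound over $\cA$.

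Fix $A \in \cA$. Given $\cH(t)$, the matrix $V(t)$ is deterministic, and $A^{\top} V^{1/2}(t) Z(t)$ is Gaussian with mean $0$ and variance $A^{\top} V(t) A$. Hence
\[
\PP\!\left(A^{\top}(\theta(t)-\hat{\mu}(t)) \ge \sigma\sqrt{8 g(t)\ln(|\cA| t) A^{\top} V(t) A}\,\Big|\,\cH(t)\right)
= \PP\!\left(\cN(0,1) \ge 2\sqrt{\ln(|\cA| t)}\right) \le \frac{1}{(|\cA|t)^2}
\]
by the standard Gaussian tail bound (Lemma~\ref{lemma:gaussian_tail} applied with $x=2\sqrt{\ln(|\cA|t)}$). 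A union bound over the $|\cA|$ actions then yields
\[
\PP\!\left(\exists A \in \cA:\; A^{\top}(\theta(t)-\hat{\mu}(t)) \ge \sigma\sqrt{8 g(t)\ln(|\cA| t)\, A^{\top} V(t) A}\,\Big|\,\cH(t)\right) \le \frac{1}{t^2},
\]
and since $A(t) \in \cA$ almost surely, the first displayed inequality of the lemma follows.

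Next I would verify $g(t)\ln(|\cA|t) \le \tilde{f}(t)$. Writing $g(t) = f(t)/\ln t$, one has
\[
g(t)\ln(|\cA|t) = (1+\lambda)\,\frac{\ln(|\cA|t)}{\ln t}\Bigl(\ln t + (m+2)\ln\ln t + \tfrac{m}{2}\ln(1+e/\lambda)\Bigr).
\]
Bounding $\frac{\ln(|\cA|t)}{\ln t}$ by $1$ when multiplying the $\ln t$ term, and by $1 + d\ln 2$ (using $|\cA|\le 2^d$ and $\ln t\ge 1$) when multiplying the remaining terms, immediately gives the announced $\tilde{f}(t)$. This is the only slightly delicate algebraic step; otherwise the bound is immediate. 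Consequently $\mathfrak{H}(t)$ is a subset of the event treated above, so $\PP(\mathfrak{H}(t) \mid \cH(t)) \le 1/t^2$, and taking expectation preserves this.

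Finally, using $\Delta(t) \le \Delta_{\max}$ and summing,
\[
\sum_{t \in [T]}\EE\!\left[\Delta(t)\indic\{\mathfrak{H}(t)\}\right] \le \Delta_{\max}\sum_{t\ge 1}\frac{1}{t^2} = \Delta_{\max}\frac{\pi^2}{6}.
\]
I do not foresee a genuine obstacle here: the only subtlety is to apply the Gaussian tail \emph{before} union-bounding over $\cA$ (so that $A(t)$, which depends on $Z(t)$, is handled by the worst-case action rather than conditioned upon), and to be slightly careful with the constants when comparing $g(t)\ln(|\cA|t)$ to $\tilde{f}(t)$.
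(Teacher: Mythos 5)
Your proposal is correct and follows essentially the same route as the paper: a union bound over $\cA$ combined with the Gaussian tail bound $Q(c)\le e^{-c^2/2}$ at $c=2\sqrt{\ln(|\cA|t)}$, the comparison $\frac{\ln(|\cA|t)}{\ln t}\le 1+d\ln 2$ to get $g(t)\ln(|\cA|t)\le\tilde f(t)$, and the $\sum_t t^{-2}=\pi^2/6$ summation. Your explicit remark that the union bound must precede the tail bound (because $A(t)$ depends on $Z(t)$) is a point the paper's write-up glosses over, but the underlying argument is identical.
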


\begin{proof}
    Let  $c>0$ by union bound we have :

\begin{align*}
 & \hspace{-2em}\PP\left(A^{\top}(t)(\theta(t)-\hat{\mu}(t))\geq c\sigma\sqrt{2g(t)A^{\top}(t)V(t)A(t)} | \cH(t)\right) \\
< &\sum\limits_{A\in \cA} \PP\left(\frac{A^{\top}(t)(\theta(t)-\hat{\mu}(t))}{\sigma\sqrt{2g(t)A^{\top}(t)V(t)A(t)} }\geq c | \cH(t)\right)\\
< &\sum\limits_{A\in \cA} \PP\left(\frac{ \sigma\sqrt{2g(t)}A^{\top}(t)V^{\frac{1}{2}}(t)Z(t)}{\sigma\sqrt{2g(t)A^{\top}(t)V(t)A(t)}} \geq c | \cH(t)\right)\\
= &|\cA|Q(c) \\
 < &\frac{|\cA|}{c\sqrt{2\pi}}\exp(-\frac{c^{2}}{2})\\
 < &|\cA|\exp(-\frac{c^{2}}{2})
\end{align*}
By setting $c = \sqrt{4\ln\left(|\cA| t\right)} > 1$ we have $|\cA|\exp\left(-\frac{c^{2}}{2}\right) = \frac{1}{t^2}$ hence the first result.

Then we have that : $\ln(|\cA|) \leq d\ln(2)$ so that :
\begin{align*}
\frac{\ln\left(|\cA| t\right)}{\ln t} &\leq  \ln(|\cA|) + 1\\
&\leq 1+d\ln(2)
\end{align*}
Hence, the second result.

And by summing over $t$:
\begin{align*}
    \EE \left[\sum\limits_{t \in [T]}\Delta(t)\indic\left\{\mathfrak{H}(t) \right\} \right] &< \sum\limits_{t \in [T]} \Delta_{\max}\EE\left[\EE \left[\indic\left\{ \mathfrak{H}(t) \right\} | \cH(t) \right]\right] \\
    &<\sum\limits_{t \in [T]} \Delta_{\max}\EE\left[ \PP(\mathfrak{H}(t)) | \cH(t) \right] \\
    &< \sum\limits_{t \in [T]} \Delta_{\max}\frac{1}{t^2}\\
    &< \Delta_{\max}\frac{\pi^2}{6}
\end{align*}
\end{proof}

We recall simple tail bounds for Gaussian random variables. (see for instance \cite{gordon_values_1941})
\begin{lemma}\label{lemma:gaussian_tail}
Consider $Z \sim N(0,1)$, then we have
\[
	\PP(Z \ge x) = Q(x) = \frac{1}{\sqrt{2 \pi}} \int_{x}^{+\infty} e^{- \frac{z^2}{2}} dz
\]
furthermore for all $x \ge 0$:
\[
	\frac{1}{\sqrt{2\pi}} \frac{x}{1 + x^2} e^{- \frac{z^2}{2}} \le Q(x) \le \frac{1}{\sqrt{2\pi}} \frac{1}{x} e^{- \frac{x^2 }{2}}
\]
As well as:
\[
	 Q(x) \le  e^{- \frac{x^2 }{2}}
\]
It is also noted that by symmetry $\PP(|Z| \ge x) = 2Q(x)$.
\end{lemma}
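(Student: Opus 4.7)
The plan is to derive both bounds from a single integration-by-parts identity, then dispatch the global bound and the symmetry separately.

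The key manipulation is to observe that
\[
\frac{d}{dz}\!\left[-\frac{e^{-z^{2}/2}}{z}\right] = \left(1 + \frac{1}{z^{2}}\right) e^{-z^{2}/2},
\]
which, after integrating from $x>0$ to $+\infty$ and dividing by $\sqrt{2\pi}$, yields the master identity
\[
\sqrt{2\pi}\, Q(x) = \frac{e^{-x^{2}/2}}{x} - \int_{x}^{+\infty} \frac{e^{-z^{2}/2}}{z^{2}}\, dz.
\]
The upper bound $Q(x) \le \frac{1}{\sqrt{2\pi}\,x} e^{-x^{2}/2}$ then follows immediately by dropping the (nonnegative) integral on the right-hand side.

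For the lower bound I would bound the same integral in the other direction: since $1/z^{2} \le 1/x^{2}$ on the integration domain,
\[
\int_{x}^{+\infty} \frac{e^{-z^{2}/2}}{z^{2}}\, dz \le \frac{1}{x^{2}} \int_{x}^{+\infty} e^{-z^{2}/2}\, dz = \frac{\sqrt{2\pi}}{x^{2}}\, Q(x).
\]
Plugging this into the master identity gives $\sqrt{2\pi}\, Q(x)\,(1 + 1/x^{2}) \ge e^{-x^{2}/2}/x$, and rearranging produces exactly $Q(x) \ge \frac{x}{(1+x^{2})\sqrt{2\pi}} e^{-x^{2}/2}$.

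The crude bound $Q(x) \le e^{-x^{2}/2}$ is not obtained from the identity (which only helps for $x \ge 1/\sqrt{2\pi}$); instead I would invoke the standard Chernoff argument, namely $Q(x) \le \EE[e^{tZ}] e^{-tx} = e^{t^{2}/2 - tx}$ for any $t>0$, and optimize at $t=x$. Finally, the symmetry assertion $\PP(|Z| \ge x) = 2 Q(x)$ is a direct consequence of the evenness of the standard Gaussian density, since $\PP(Z \le -x) = \PP(Z \ge x)$ and the two events are disjoint. The only mild subtlety is that the lower bound and the first form of the upper bound require $x>0$, but both claimed inequalities are trivial or vacuous at $x = 0$, so this causes no issue.
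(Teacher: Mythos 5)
Your proof is correct. The paper does not actually prove this lemma — it simply states it as a standard fact with a citation to Gordon (1941) — so there is no in-paper argument to compare against; your derivation (the integration-by-parts identity $\sqrt{2\pi}\,Q(x) = x^{-1}e^{-x^2/2} - \int_x^\infty z^{-2}e^{-z^2/2}\,dz$ for the two-sided bound, a Chernoff bound for $Q(x)\le e^{-x^2/2}$, and evenness of the density for the symmetry claim) is exactly the classical route such a reference follows. You also correctly flagged the one genuine subtlety, namely that the sharp upper bound only implies the crude one for $x\ge 1/\sqrt{2\pi}$, which is why the separate Chernoff argument is needed. (Incidentally, the $e^{-z^2/2}$ in the lower bound of the lemma statement is a typo for $e^{-x^2/2}$, as your proof implicitly confirms.)
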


\section{Functions Study}

\begin{lemf}
    \label{lemf:foverg}
    Let $f$ be the following function :
    \[
        f(t,m,\lambda) := (1+\lambda) \left( \ln t + (m+2)\ln\ln t + \frac{m}{2}\ln\left(1+ \frac{e}{\lambda}\right)\right)
    \]
    We have 
    \begin{align*}
        \forall t>23, m \in \NN^{\star}, \lambda \in \RR^{+},  \frac{f(t,m,\lambda)}{f(\frac{t}{2},m,\lambda)} \leq \frac{f(23,1,\lambda)}{f(\frac{23}{2},1,\lambda)} \leq 1.282.
    \end{align*}

\end{lemf}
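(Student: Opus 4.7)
The first observation is that the $(1+\lambda)$ factor cancels in the ratio, so it suffices to analyse
\[
g(t,m,\lambda) \;:=\; \ln t + (m+2)\ln\ln t + \tfrac{m}{2}\ln(1+e/\lambda)
\]
and the ratio $\rho(t,m,\lambda) := g(t,m,\lambda)/g(t/2,m,\lambda)$. My plan is to establish monotonicity of $\rho$ in each of $t$, $m$, $\lambda$ separately, thereby reducing the supremum over $(23,\infty) \times \mathbb{N}^{\star} \times \mathbb{R}^{+}$ to the corner $t \to 23^{+}$, $m = 1$, $\lambda \to \infty$, which produces an explicit numerical bound.

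First, I would show $t \mapsto \rho(t,m,\lambda)$ is decreasing on $(23,\infty)$. A direct differentiation shows that $\partial_t \rho \le 0$ is equivalent to
\[
\frac{1 + (m+2)/\ln t}{g(t,m,\lambda)} \;\le\; \frac{1 + (m+2)/\ln(t/2)}{g(t/2,m,\lambda)},
\]
which holds because the left-hand numerator is smaller than the right-hand one (since $\ln t > \ln(t/2) > 0$) while the left-hand denominator is larger (since $g$ is increasing in $t$). Hence the supremum over $t > 23$ is attained as $t \to 23^{+}$.

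Next, at $t = 23$, I would write $g(23,m,\lambda) = A + mB$ and $g(23/2,m,\lambda) = A' + mB'$ with $A = \ln 23 + 2\ln\ln 23$, $A' = \ln(23/2) + 2\ln\ln(23/2)$, $B = \ln\ln 23 + \tfrac{1}{2}\ln(1+e/\lambda)$, $B' = \ln\ln(23/2) + \tfrac{1}{2}\ln(1+e/\lambda)$. The sign of $\partial_m \rho$ equals that of
\[
BA' - AB' \;=\; \bigl(\ln\ln 23\cdot\ln(23/2) - \ln 23\cdot\ln\ln(23/2)\bigr) - \tfrac{\ln 2}{2}\ln(1+e/\lambda) + \ln(1+e/\lambda)\bigl(\ln\ln(23/2) - \ln\ln 23\bigr).
\]
The last two summands are non-positive for $\lambda \in \mathbb{R}^{+}$ (since $\ln\ln$ is increasing on $(1,\infty)$). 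The first summand evaluates numerically to $\approx 2.7911 - 2.7991 = -0.0080 < 0$, so $\rho$ is strictly decreasing in $m$ and maximized at $m = 1$.

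Finally, at $(t,m) = (23,1)$, setting $L := \tfrac{1}{2}\ln(1 + e/\lambda) \ge 0$, one has $\rho = (A_0 + L)/(B_0 + L)$ with $A_0 := \ln 23 + 3\ln\ln 23$ and $B_0 := \ln(23/2) + 3 \ln\ln(23/2)$. Since $A_0 > B_0 > 0$, this map is decreasing in $L$, hence increasing in $\lambda$, with supremum $A_0/B_0$ as $\lambda \to \infty$. A direct numerical evaluation yields $A_0 \approx 6.5639$, $B_0 \approx 5.1212$, hence $A_0/B_0 \approx 1.2817 < 1.282$, giving the claim. The main obstacle is the tightness of this last estimate: the value $\approx 1.2817$ differs from $1.282$ by less than $10^{-3}$, so the evaluations of $\ln\ln 23$ and $\ln\ln(23/2)$ must be carried out with enough precision; similarly the negativity of the cross-term $\ln\ln 23 \cdot \ln(23/2) - \ln 23 \cdot \ln\ln(23/2)$ is not obvious a priori and must be verified numerically.
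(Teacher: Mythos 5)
Your proof is correct and follows essentially the same strategy as the paper's: reduce to the corner $(t,m,\lambda)=(23,1,\lambda\to\infty)$ by coordinate-wise monotonicity and evaluate numerically, relying on the same two numerical facts (the sign of $\ln\ln 23\cdot\ln(23/2)-\ln 23\cdot\ln\ln(23/2)$ and the final ratio $\approx 1.2817<1.282$). The only differences are the order of eliminations — the paper strips the $\lambda$-term first (its Lemma f.\ref{lemf:aplustoverbplust}), then reduces $m$ (Lemma f.\ref{lemf:ftoverft2}), then $t$ (Lemma f.\ref{lemf:fdecrease}), whereas you do $t$, then $m$, then $\lambda$ — and your monotonicity-in-$t$ step, proved for general $m$ and $\lambda$ by comparing numerators and denominators of the log-derivatives, is a little cleaner and more general than the paper's term-by-term expansion, which only treats the $m=1$, $\lambda$-free case.
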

\begin{proof}

    \[
    \frac{f(t,m,\lambda)}{f(\frac{t}{2},m,\lambda)} = \frac{(1+\lambda) \left( \ln t + (m+2)\ln\ln t + \frac{m}{2}\ln\left(1+ \frac{e}{\lambda}\right)\right)}{(1+\lambda) \left( \ln(\frac{t}{2}) + (m+2)\ln\ln(\frac{t}{2}) + \frac{m}{2}\ln\left(1+ \frac{e}{\lambda}\right)\right)}
    \] 
    Using f. \ref{lemf:aplustoverbplust} we have that :
    \[
    \frac{f(t,m,\lambda)}{f(\frac{t}{2},m,\lambda)} \leq \frac{ \ln t + (m+2)\ln\ln t }{  \ln(\frac{t}{2}) + (m+2)\ln\ln(\frac{t}{2}) }
    \]
    Using f. \ref{lemf:ftoverft2} we have that for $t > 23, \forall m \geq 1$ 
    
    \[
    \frac{f(t,m,\lambda)}{f(\frac{t}{2},m,\lambda)} \leq \frac{ \ln t + 3\ln\ln t }{  \ln(\frac{t}{2}) + 3\ln\ln(\frac{t}{2})}
    \]
    But thanks to f. \ref{lemf:fdecrease}, the right-hand side is decreasing in $t$, so :

    \begin{align*}
        \forall t>23, m \in \NN^{\star}, \lambda \in \RR^{+},  \frac{f(t,m,\lambda)}{f(\frac{t}{2},m,\lambda)} \leq \frac{f(23,1,\lambda)}{f(\frac{23}{2},1,\lambda)} \leq 1.282.
    \end{align*}    
\end{proof}

\begin{lemf}
\label{lemf:aplustoverbplust}
Let $a\geq b\geq 0$ and let $g : \RR^{+} \rightarrow \RR^{+}$ be the following function :
\[
g(t) := \frac{a+t}{b+t}
\]
Then $\forall t\geq0, g(t) < g(0) < \frac{a}{b}$
\end{lemf}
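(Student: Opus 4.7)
The plan is to exhibit the monotonicity of $g$ directly by rewriting it in a form where the $t$-dependence is isolated in a single term. Specifically, I would write
\[
g(t) \;=\; \frac{a+t}{b+t} \;=\; 1 + \frac{a-b}{b+t}.
\]
Since the hypothesis $a \ge b$ makes the numerator $a-b$ nonnegative and the denominator $b+t$ strictly increasing in $t \ge 0$, the fraction $\frac{a-b}{b+t}$ is non-increasing in $t$ (and strictly decreasing as soon as $a > b$). Hence $g$ itself is non-increasing, and evaluating at $t=0$ gives
\[
g(t) \;\le\; g(0) \;=\; 1 + \frac{a-b}{b} \;=\; \frac{a}{b},
\]
which is exactly the claimed bound, with strict inequality whenever $a > b$ and $t > 0$.

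An entirely equivalent one-line route is the derivative computation
\[
g'(t) \;=\; \frac{(b+t)-(a+t)}{(b+t)^2} \;=\; \frac{b-a}{(b+t)^2} \;\le\; 0,
\]
which again shows $g$ is non-increasing on $\mathbb{R}^+$ and yields the same conclusion.

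There is no substantive obstacle here; the argument is essentially one manipulation. The only subtlety worth flagging is that the statement implicitly requires $b > 0$ for the ratio $a/b$ and the value $g(0)$ to be well-defined. The degenerate case $b = 0$ makes the right-hand side vacuous and is never invoked in the applications of the lemma (in particular, in the proof of Lemma~\ref{lemf:foverg} the roles of $a$ and $b$ are played by strictly positive logarithmic terms).
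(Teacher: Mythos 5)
Your proposal is correct and uses exactly the same decomposition as the paper's proof, namely rewriting $g(t) = 1 + \frac{a-b}{b+t}$ and observing monotonicity; your added remark about the degenerate case $b=0$ is a fair (if minor) point of care that the paper glosses over.
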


\begin{proof}
Let $t \in \RR^{+}$, we have : 
\begin{align*}
g(t) &= \frac{a+t}{b+t}\\
 &= \frac{a+b+t}{b+t}- \frac{b}{b+t}\\
 &= 1 + \frac{a-b}{b+t}
\end{align*}
So $g$ is decreasing in $t$.
\end{proof}

\begin{lemf}
    \label{lemf:ftoverft2}
The function 
\[
f(t,m) := \frac{ \ln t + (m+2)\ln\ln t }{  \ln(\frac{t}{2}) + (m+2)\ln\ln(\frac{t}{2}) }
\]
is decreasing in $m$ for  $t > 23$.

Which means that for $t>23$, for $\forall m \geq 1$ :

\[
\frac{ \ln t + (m+2)\ln\ln t }{  \ln(\frac{t}{2}) + (m+2)\ln\ln(\frac{t}{2}) } \leq \frac{ \ln t + \ln\ln t }{  \ln(\frac{t}{2}) + 3\ln\ln(\frac{t}{2}) }
\]
\end{lemf}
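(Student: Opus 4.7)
The plan is to reduce the monotonicity claim in $m$ to a single-variable inequality about logarithms, dispatch that inequality via a derivative argument, and finish with a numerical boundary check. Set $u := m+2$, $a := \ln t$, $b := \ln \ln t$, $c := \ln(t/2)$, $d := \ln \ln(t/2)$; each is positive because $t > 23 > e^{e}$ implies $\ln(t/2) > e > 1$. With this notation $f(t,m) = (a + ub)/(c + ud)$, so the quotient rule yields
\[
\frac{\partial f}{\partial m} \;=\; \frac{bc - ad}{(c + ud)^{2}}.
\]
The denominator is positive, so $\partial f/\partial m$ has the sign of $bc - ad$, which crucially does not depend on $m$. Hence it suffices to show $H(t) := ad - bc \geq 0$ for all $t \geq 23$.

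To study $H$, I would differentiate and then normalize by the substitution $p := \ln(t/2)/\ln t \in (0,1)$. Using the identities $\ln \ln(t/2) - \ln \ln t = \ln p$ and $\ln t/\ln(t/2) - \ln(t/2)/\ln t = 1/p - p$, a short calculation reduces the derivative to
\[
t\, H'(t) \;=\; \psi(p), \qquad \psi(p) \;:=\; \ln p + \frac{1}{p} - p.
\]
Here $\psi(1) = 0$ and $\psi'(p) = 1/p - 1/p^{2} - 1 = -(p^{2} - p + 1)/p^{2}$, which is strictly negative on $(0,1)$ because the quadratic $p^{2} - p + 1$ has discriminant $-3 < 0$ and therefore is positive. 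Consequently $\psi$ is strictly decreasing on $(0,1)$, so $\psi(p) > 0$ on this interval, and $H'(t) > 0$ for every $t > 2$. Thus $H$ is strictly increasing on $(2, \infty)$.

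It then remains to verify $H(23) \geq 0$. Plugging in $\ln 23 \approx 3.1355$, $\ln(23/2) \approx 2.4423$, $\ln \ln 23 \approx 1.1430$, $\ln \ln(23/2) \approx 0.8930$ gives $H(23) \approx 3.1355 \cdot 0.8930 - 1.1430 \cdot 2.4423 \approx 0.009 > 0$. Combined with strict monotonicity, this forces $H(t) > 0$ for all $t > 23$, which is exactly the desired inequality, and therefore $m \mapsto f(t,m)$ is decreasing on that range. The main obstacle is the tightness of the threshold: the analogous computation at $t = 22$ already gives $H(22) < 0$, so the boundary evaluation must be carried out with enough decimal precision to certify the sign; once $H(23) > 0$ is confirmed, the rest of the argument is elementary calculus.
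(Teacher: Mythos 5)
Your reduction is the same as the paper's: write $f(t,m)=(a+ub)/(c+ud)$ with $a=\ln t$, $b=\ln\ln t$, $c=\ln(t/2)$, $d=\ln\ln(t/2)$, apply the quotient rule in $m$ (equivalently $u$), and observe that the sign of $\partial f/\partial m$ is that of $bc-ad$, independent of $m$. Where you go further is that the paper simply \emph{asserts} that $\ln\ln t\,\ln(t/2)-\ln\ln(t/2)\,\ln t<0$ for $t>23$ without justification, whereas you actually prove it: you show $H(t)=ad-bc$ satisfies $tH'(t)=\ln p+1/p-p$ with $p=\ln(t/2)/\ln t\in(0,1)$, that this expression is positive because $\psi(p)=\ln p+1/p-p$ is strictly decreasing with $\psi(1)=0$ (the discriminant computation for $p^2-p+1$ is correct), and then certify $H(23)>0$ numerically. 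I checked the computation: $tH'(t)$ does equal $\ln\ln(t/2)-\ln\ln t+\ln t/\ln(t/2)-\ln(t/2)/\ln t$, and $H(23)\approx 0.009>0$ while $H(22)<0$, so your remark that the threshold is tight and the boundary check needs some precision is well taken. This monotonicity-plus-boundary argument is exactly the verification the paper's proof is missing, so your write-up is strictly more complete.

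One small slip: you justify positivity of $d=\ln\ln(t/2)$ by claiming $t>23>e^e$ implies $\ln(t/2)>e$. That implication is false as stated ($\ln(23/2)\approx 2.44<e$); what you actually have is $t/2>11.5>e$, hence $\ln(t/2)>1$ and so $\ln\ln(t/2)>0$, which is all you need for the denominator $c+ud$ to be positive. This does not affect the validity of the argument.
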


\begin{proof}
    Let $l(m) = \frac{a+mb}{c+md}$, its derivative is :
\[
l'(m) = \frac{b(c+md)-(a+mb)d}{(c+md)^{2}} = \frac{bc-ad}{(c+md)^{2}}
\]
So $l'(m) > 0$ i.i.f. $bc-ad > 0$. Furthermore, $\ln\ln t\ln(\frac{t}{2}) - \ln\ln(\frac{t}{2})\ln t < 0$ for $t > 23$
So $\forall t>23, f(t,m)$ is decreasing in $m$ and $\forall t>23, f(t,m) < f(t,1)$.
\end{proof}

\begin{lemf}
\label{lemf:fdecrease}
The function :
\[
f(t) := \frac{ \ln t + 3\ln\ln t }{  \ln(\frac{t}{2}) + 3\ln\ln(\frac{t}{2})},
\]
is decreasing.

\end{lemf}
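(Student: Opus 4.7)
The plan is to rewrite $f(t)$ as $1$ plus a quotient whose numerator is positive and decreasing and whose denominator is positive and increasing, so the quotient itself is decreasing. Concretely, I would subtract the denominator from the numerator to obtain
\[
f(t) = 1 + \frac{N(t)}{D(t)}, \quad N(t) := \ln 2 + 3\bigl[\ln\ln t - \ln\ln(t/2)\bigr], \quad D(t) := \ln(t/2) + 3\ln\ln(t/2).
\]
This is the key structural observation and is an immediate algebraic rearrangement.

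Next, I would check positivity of both terms on the domain of interest (say $t \geq 23$, consistent with the preceding lemma). For $N(t)$, since $\ln\ln$ is an increasing function, $\ln\ln t \geq \ln\ln(t/2)$, so $N(t) \geq \ln 2 > 0$. For $D(t)$, both $\ln(t/2)$ and $\ln\ln(t/2)$ are positive as soon as $t/2 > e$, which is well within the regime $t \geq 23$.

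The main computation is the monotonicity of $N$ and $D$ via elementary differentiation. First,
\[
N'(t) = \frac{3}{t \ln t} - \frac{3}{t\ln(t/2)},
\]
which is strictly negative because $\ln(t/2) < \ln t$, so $1/\ln(t/2) > 1/\ln t$; hence $N$ is strictly decreasing. Second,
\[
D'(t) = \frac{1}{t} + \frac{3}{t \ln(t/2)} > 0,
\]
so $D$ is strictly increasing. The quotient of a positive decreasing function over a positive increasing function is strictly decreasing, so $N/D$ and therefore $f = 1 + N/D$ is strictly decreasing on the required range.

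There is no real obstacle: the argument reduces to a one-line decomposition plus two routine derivative sign checks. The only care needed is to record the mild lower bound on $t$ that guarantees $D(t) > 0$, but this is compatible with the $t > 23$ threshold used in Lemma~\ref{lemf:foverg}, so the chain of inequalities that culminates in the bound $f(23,1,\lambda)/f(23/2,1,\lambda) \leq 1.282$ goes through without additional work.
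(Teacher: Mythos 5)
Your proof is correct, but it takes a different route from the paper's. The paper differentiates $f$ directly via the quotient rule and shows that the numerator of $f'$, namely $\bigl(\tfrac{1}{t} + \tfrac{3}{t\ln t}\bigr)\bigl(\ln(\tfrac{t}{2})+3\ln\ln(\tfrac{t}{2})\bigr)- \bigl(\tfrac{1}{t} + \tfrac{3}{t\ln(t/2)}\bigr)\bigl(\ln t+ 3\ln\ln t\bigr)$, is negative by expanding it into a $-\ln 2/t$ term plus three paired differences, each of which is checked to be nonpositive term by term. You instead exploit the algebraic identity $f = 1 + N/D$ with $N(t) = \ln 2 + 3[\ln\ln t - \ln\ln(t/2)]$ and $D(t) = \ln(t/2)+3\ln\ln(t/2)$, and reduce everything to two one-line derivative sign checks plus the standard fact that a positive decreasing numerator over a positive increasing denominator yields a decreasing quotient. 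Your version is cleaner and avoids the bookkeeping of the expansion; it also makes explicit the mild domain condition $D(t)>0$ (roughly $t>2e$), which the paper leaves implicit when it asserts $f'(t)<0$ for $t>3$. Since the lemma is only invoked for $t>23$ in Lemma f.\ref{lemf:foverg}, both arguments cover the needed range, and the downstream bound $f(23,1,\lambda)/f(\tfrac{23}{2},1,\lambda)\le 1.282$ is unaffected.
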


\begin{proof}
    We differentiate, and we obtain 

\begin{align*}
& f'(t)< 0 \\
& \iff (\frac{1}{t} + \frac{3}{t\ln t})\left(\ln\left(\frac{t}{2}\right)+3\ln\ln(\frac{t}{2})\right)- (\frac{1}{t} + \frac{3}{t\ln(\frac{t}{2})})(\ln t+ 3\ln\ln t) < 0.
\end{align*}

We expend and simplify :

\begin{align*}
 & \hspace{-2em}(\frac{1}{t} + \frac{3}{t\ln t})\left(\ln\left(\frac{t}{2}\right)+3\ln\ln(\frac{t}{2})\right)- (\frac{1}{t} + \frac{3}{t\ln(\frac{t}{2})})(\ln t+ 3\ln\ln t)\\
& =  -\frac{\ln(2)}{t} + \left(\frac{3\ln(\frac{t}{2})}{t\ln t} -\frac{3\ln t}{t\ln(\frac{t}{2})}\right) \\
 & + \left(\frac{3\ln\ln(\frac{t}{2})}{t} - \frac{3\ln\ln t}{t}\right) \\
 & +  \left(\frac{9\ln\ln(\frac{t}{2})}{t\ln t} - \frac{9\ln\ln t}{t\ln(\frac{t}{2})}\right). 
\end{align*}
We have :

\begin{align*}
\frac{3\ln t}{t\ln(\frac{t}{2})} &> \frac{3\ln t}{t\ln t} > \frac{3\ln(\frac{t}{2})}{t\ln t}\\
\end{align*}
and 
\begin{align*}
\frac{3\ln\ln t}{t} &> \frac{3\ln\ln(\frac{t}{2})}{t} \\
\end{align*}
and

\begin{align*}
\frac{9\ln\ln t}{t\ln(\frac{t}{2})}  &> \frac{9\ln\ln t}{t\ln t} > \frac{9\ln\ln(\frac{t}{2})}{t\ln t}\\
\end{align*}
Therefore $\forall t>3, f'(t) < 0$ and the function $f$ is decreasing.
\end{proof}

\begin{lemf}
\label{lemf:Qfuncaprox}
Let $c > 1$ be a positive constant and let $f$ be the following function :
\[
f(t) = Q(c\sqrt{\ln(t/2)})  \geq \frac{1}{\sqrt{2\pi}}\frac{c\sqrt{\ln(t/2)}}{1 + c^{2}\ln(t/2)} (t/2)^{- \frac{c^2}{2}} 
\]
Then for $t>2e$ we have

\[
f(t) \geq \frac{1}{\sqrt{2\pi}}\frac{1}{2c} (t/2)^{- \frac{c^{2}+ \frac{1}{2}}{2}} = \frac{1}{\sqrt{2\pi}}\frac{1}{2^{\frac{5}{4}} 2^{\frac{c^2}{2}}c} t^{- \frac{c^{2}+ \frac{1}{2}}{2}}
\]
\end{lemf}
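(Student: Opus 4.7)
The first displayed inequality in the statement is simply the standard Gaussian lower tail bound from Lemma~\ref{lemma:gaussian_tail} applied with $x = c\sqrt{\ln(t/2)}$, so nothing is to be proved there. The real work is to show the second inequality, namely
\[
\frac{c\sqrt{\ln(t/2)}}{1+c^{2}\ln(t/2)}\,(t/2)^{-c^{2}/2} \;\geq\; \frac{1}{2c}\,(t/2)^{-(c^{2}+1/2)/2}
\qquad\text{for all } t>2e.
\]
My plan is to set $u:=\ln(t/2)$, divide both sides by $(t/2)^{-(c^{2}+1/2)/2}$, and note that the remaining factor of $(t/2)$ is exactly $(t/2)^{1/4}=e^{u/4}$. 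The inequality then reduces to showing
\[
2c^{2}\sqrt{u}\,e^{u/4} \;\geq\; 1 + c^{2}u \qquad \text{for } u\geq 1.
\]

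To close this, I would first use that for $c>1$ and $u\geq 1$ one has $c^{2}u\geq 1$, so $1+c^{2}u\leq 2c^{2}u$. It then suffices to prove $\sqrt{u}\,e^{u/4}\geq u$, i.e.\ $e^{u/4}\geq \sqrt{u}$, i.e.\ $u\geq 2\ln u$. A short study of $\varphi(u):=u-2\ln u$ (using $\varphi'(u)=1-2/u$) shows $\varphi$ decreases on $(0,2)$ and increases on $(2,\infty)$, with minimum $\varphi(2)=2-2\ln 2>0$, so $\varphi(u)>0$ on $u\geq 1$. This gives the inequality for every $t\geq 2e$.

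For the final equality in the statement, I would just rewrite $(t/2)^{-(c^{2}+1/2)/2}$ as $2^{(c^{2}+1/2)/2}\,t^{-(c^{2}+1/2)/2}$ and absorb the power of $2$ into the prefactor $1/(2c)$, exactly as claimed. The only mildly delicate step is the function study of $\varphi$; everything else is bookkeeping of exponents, and since the only tool used is the Gaussian tail bound already recorded in Lemma~\ref{lemma:gaussian_tail}, no probabilistic argument is needed beyond it.
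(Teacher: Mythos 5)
Your argument is correct and is essentially the paper's own proof after the substitution $u=\ln(t/2)$: both reduce the claim to $1+c^{2}u\le 2c^{2}u$ (valid since $c^{2}u\ge 1$ for $c>1$, $u\ge 1$) combined with $e^{u/4}\ge\sqrt{u}$, i.e.\ $u\ge 2\ln u$, which the paper asserts in the form $1/\sqrt{\ln(t/2)}>\exp(-\tfrac14\ln(t/2))$ and you justify explicitly via the study of $u-2\ln u$. One caveat on your final step: carrying out the bookkeeping you describe gives the prefactor $2^{(c^{2}+1/2)/2}/(2c)=2^{c^{2}/2-3/4}/c$, not the $1/(2^{5/4}2^{c^{2}/2}c)$ printed in the lemma --- the displayed ``equality'' in the statement has the power of $2$ inverted --- so your computation does not land ``exactly as claimed''; since the printed constant is the smaller of the two, the lower bound (and its downstream use in fixing $C_2$) is unaffected, but you should not present that last display as an identity.
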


\begin{proof}
For $t>2e$ we have that $1 < c^{2}\ln (t/2)$ so :

\[
\frac{c\sqrt{\ln(t/2)}}{1 + c^{2}\ln{(t/2)}} > \frac{1}{2c} \frac{1}{\sqrt{\ln (t/2)}}
\]
For $t>2e$ we have that $\frac{1}{\sqrt{\ln (t/2)}}  = \exp\left(-\frac{1}{2}\ln\ln (t/2)\right) > \exp\left(-\frac{1}{4}\ln (t/2)\right)$. Therefore :

\[
f(t) \geq \frac{1}{\sqrt{2\pi}}\frac{c\sqrt{\ln(t/2)}}{1 + c^{2}\ln(t/2)} (t/2)^{- \frac{c^2}{2}}  > \frac{1}{\sqrt{2\pi}}\frac{1}{2c}(t/2)^{-\frac{c^{2}+\frac{1}{2}}{2}}  
\]

\end{proof}

\begin{lemf}
\label{lemf:loglogtoverlogt}
\[
\forall t>e, \frac{\ln\ln t}{\ln t} < \frac{1}{2}
\]
\end{lemf}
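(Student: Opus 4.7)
The plan is to reduce the inequality to a one-variable calculus problem by substituting $u = \ln t$. For $t > e$ we have $u > 1$, and the claim becomes $\frac{\ln u}{u} < \frac{1}{2}$ for all $u > 1$.

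I will study the auxiliary function $f(u) := \frac{\ln u}{u}$ on $(0, \infty)$. Its derivative is $f'(u) = \frac{1 - \ln u}{u^2}$, so $f'(u) > 0$ for $u < e$, $f'(e) = 0$, and $f'(u) < 0$ for $u > e$. Hence $f$ attains its global maximum on $(0, \infty)$ at $u = e$, with value $f(e) = \frac{1}{e}$. Since $\frac{1}{e} < \frac{1}{2}$ (because $e > 2$), this gives the uniform bound $f(u) \leq \frac{1}{e} < \frac{1}{2}$ for all $u > 0$, and in particular for $u = \ln t > 1$.

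Combining these observations yields $\frac{\ln \ln t}{\ln t} = f(\ln t) \leq \frac{1}{e} < \frac{1}{2}$ for every $t > e$, which is the claim. I do not anticipate any real obstacle here; the only sanity check is that $\frac{1}{e} < \frac{1}{2}$, which follows from $e > 2$. Note that the bound is actually tight at $t = e^e$, where $\frac{\ln\ln t}{\ln t} = \frac{1}{e}$, so one cannot replace $\frac{1}{2}$ by anything strictly smaller than $\frac{1}{e}$ without further restriction on $t$.
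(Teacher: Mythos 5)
Your proof is correct. The paper states this lemma bare, with no proof at all (it is treated as an evident calculus fact), so there is nothing to compare against; your substitution $u=\ln t$ and the observation that $u\mapsto \frac{\ln u}{u}$ is maximized at $u=e$ with value $\frac{1}{e}<\frac{1}{2}$ is a complete and in fact sharp justification, stronger than what the paper records. One minor imprecision in your closing remark: since the lemma's inequality is strict and the value $\frac{1}{e}$ is actually attained at $t=e^{e}$, the constant $\frac{1}{2}$ cannot be replaced by $\frac{1}{e}$ itself either, only by constants strictly greater than $\frac{1}{e}$.
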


\begin{lemf}
\label{lemf:ftgtbound}
Let $f, g$ the following function :
\[f(t,m,\lambda) := (1+\lambda) \left( \ln t + (m+2)\ln\ln t + \frac{m}{2}\ln\left(1+ \frac{e}{\lambda}\right)\right)
\]
\[
g(t) = \frac{f(t)}{\ln t}
\]
For $t > e$ and $t > 1+ \frac{e}{\lambda}$ we have $\frac{\ln\ln t}{\ln t} < \frac{1}{2}$ thus 

\[
f(t) < (1+ \lambda)\left(2m+1\right)\ln t,
\]
and 
\[
g(t) < (1+ \lambda)\left(2m+1\right).
\]

\end{lemf}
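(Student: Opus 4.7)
The plan is to verify the two claimed inequalities by bounding each of the three terms inside $f(t,m,\lambda)$ separately against a constant multiple of $\ln t$, and then summing. The statement about $\ln\ln t / \ln t < 1/2$ is already available as lemma f.~\ref{lemf:loglogtoverlogt} (it follows from maximizing $u \mapsto (\ln u)/u$ which peaks at $u=e$ giving value $1/e < 1/2$), so I would simply invoke it.

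First I would handle the middle term: since $t > e$, lemma f.~\ref{lemf:loglogtoverlogt} gives $\ln\ln t < \tfrac{1}{2}\ln t$, hence
\[
(m+2)\ln\ln t < \tfrac{m+2}{2}\ln t.
\]
Next I would handle the constant term: since $t > 1 + e/\lambda$ and $\ln$ is increasing,
\[
\tfrac{m}{2}\ln\!\left(1+\tfrac{e}{\lambda}\right) < \tfrac{m}{2}\ln t.
\]
Adding the $\ln t$ term and combining gives
\[
\ln t + (m+2)\ln\ln t + \tfrac{m}{2}\ln\!\left(1+\tfrac{e}{\lambda}\right) < \ln t + \tfrac{m+2}{2}\ln t + \tfrac{m}{2}\ln t = (m+2)\ln t.
\]
Since $m \ge 1$ implies $m+2 \le 2m+1$, multiplying both sides by $(1+\lambda)$ yields
\[
f(t,m,\lambda) < (1+\lambda)(2m+1)\ln t,
\]
which is the first claimed inequality.

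The second inequality $g(t) < (1+\lambda)(2m+1)$ follows immediately by dividing both sides by $\ln t > 0$ (valid since $t > e > 1$) and recalling the definition $g(t) = f(t)/\ln t$. The main (minor) obstacle is merely bookkeeping the hypothesis $m \ge 1$, which is implicit from the problem setting, and ensuring both conditions $t > e$ and $t > 1 + e/\lambda$ are used, one for each of the two non-leading terms. No deep estimate is needed beyond the already-established $\ln\ln t / \ln t < 1/2$.
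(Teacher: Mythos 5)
Your proof is correct and follows essentially the same route as the paper: bound $\ln\ln t$ via $\ln\ln t/\ln t < 1/2$ (the paper's lemma f.~\ref{lemf:loglogtoverlogt}), bound the constant term using $t > 1+e/\lambda$, and absorb the slack via $m+2 \le 2m+1$. In fact you spell out the bookkeeping more carefully than the paper's one-line proof, which as written only invokes the cruder bound $\ln\ln t < \ln t$ (giving coefficient $3m/2+3$, insufficient for $m<4$), whereas your use of the $1/2$ factor is what actually delivers the stated constant for all $m \ge 1$.
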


\begin{proof}
    For $t > e, \ln t > \ln\ln t$ and for $t > 1+ \frac{e}{\lambda}, \ln t > \ln(1+ \frac{e}{\lambda})$ thus the result
\end{proof}

\begin{lemf}
\label{lemf:poweralphalog}
Let $\alpha, c \in \RR^{\star}_{+}$, two strictly positive constants such that $\frac{c}{\alpha} > 1$ and define the function $f(t) := t^{\alpha} - c\ln t$. We have that $\forall t > \left(\frac{\frac{c}{\alpha}\ln \frac{c}{\alpha}}{1- \frac{1}{e}}\right)^{\frac{1}{\alpha}}, f(t) > 0.$ Furthermore, we have : $\left(\frac{\frac{c}{\alpha}\ln \frac{c}{\alpha}}{1- \frac{1}{e}}\right)^{\frac{1}{\alpha}} < \left(\frac{1}{\alpha}\right)^{1+\frac{2}{\alpha}}\left(1-\frac{1}{e}\right)^{-\frac{1}{\alpha}} c^{1+\frac{1}{\alpha}}$ 
\end{lemf}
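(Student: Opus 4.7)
The lemma makes two assertions: (i) that $f(t) > 0$ for $t > t_0 := \left(\frac{(c/\alpha)\ln(c/\alpha)}{1 - 1/e}\right)^{1/\alpha}$, and (ii) an explicit closed-form upper bound on $t_0$. My plan is to reduce (i) to a one-variable optimization problem and settle (ii) by a one-line algebraic rearrangement, with all the substance concentrated in a single tangent-line inequality for $\ln$.

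For (i), I would substitute $u = t^{\alpha}$ and set $k := c/\alpha > 1$, so the claim becomes
\[
h(u) := u - k \ln u > 0 \quad \text{for all } u > u_0 := \frac{k \ln k}{1 - 1/e}.
\]
Because $h'(u) = 1 - k/u$, the function $h$ is strictly convex with unique minimum $h(k) = k(1 - \ln k)$. I would split into two cases. If $\ln k \leq 1 - 1/e$, then $h(k) \geq k/e > 0$, so $h > 0$ on all of $(0,\infty)$ and there is nothing left to prove. Otherwise $\ln k > 1 - 1/e$, which forces $u_0 > k$, and since $h$ is strictly increasing on $[k, \infty)$ it suffices to verify $h(u_0) \geq 0$. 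A direct computation reduces this to
\[
\ln \ln k \;\leq\; \frac{\ln k}{e - 1} + \ln\!\left(1 - \tfrac{1}{e}\right).
\]
The key step is to recognize this as the tangent-line bound for the concave function $\ln$ at the point $x_0 = e - 1$: indeed, $\ln x \leq \ln x_0 + (x - x_0)/x_0$ for all $x > 0$, and with $x_0 = e - 1$ the right-hand side simplifies to $\frac{x}{e-1} + \ln(1 - 1/e)$; applying this at $x = \ln k$ yields exactly the inequality above. Strictness for $u > u_0$ (rather than just $u \geq u_0$) then propagates from $h$ being strictly increasing past $u_0$.

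For (ii), raising both sides to the $\alpha$-th power and cancelling the common factor $(1 - 1/e)^{-1}$ reduces the claim to $\ln(c/\alpha) < (c/\alpha)^{\alpha}/\alpha$, equivalently $\ln\bigl((c/\alpha)^{\alpha}\bigr) < (c/\alpha)^{\alpha}$, which is the elementary bound $\ln x < x$ for $x > 0$. The main obstacle is the tangent-line identification in Case 2 of (i): without it, the specific constant $1 - 1/e$ in the lemma looks opaque, and one might be tempted to grind through a weaker exponential bound on $\ln\ln k$; once the concavity trick is spotted, the rest is bookkeeping.
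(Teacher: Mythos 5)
Your argument is correct, and the second assertion is handled exactly as in the paper (rewrite $\left(\ln\tfrac{c}{\alpha}\right)^{1/\alpha}$ via $\ln x < x$ applied to $x=(c/\alpha)^{\alpha}$ and collect powers of $\tfrac{1}{\alpha}$), but your treatment of the first assertion takes a genuinely different route. The paper never looks at the minimizer of $h(u)=u-k\ln u$: it linearizes the concave map $T\mapsto k\ln T$ at the single point $T_0=ke$, obtaining the global bound $k\ln T\le \tfrac{T}{e}+k\ln k$, hence $h(T)\ge\left(1-\tfrac{1}{e}\right)T-k\ln k$, which is positive as soon as $T>u_0=\tfrac{k\ln k}{1-1/e}$ — one tangent line, no case analysis, and this is precisely where the constant $1-\tfrac1e$ is manufactured. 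You instead locate the minimum of $h$ at $u=k$, split according to whether $h(k)=k(1-\ln k)$ is already positive, and in the remaining case use monotonicity of $h$ on $[k,\infty)$ together with the verification $h(u_0)\ge 0$, which you reduce to $\ln\ln k\le \tfrac{\ln k}{e-1}+\ln\left(1-\tfrac1e\right)$ and prove by the tangent bound for $\ln$ at $x_0=e-1$ applied to $x=\ln k$. Both proofs ultimately rest on concavity of the logarithm, but applied to different quantities ($\ln T$ in the $u$-variable versus $\ln\ln k$ as a function of $\ln k$). The paper's version is shorter and gives a clean global lower bound on $h$; yours is slightly more work but evaluates $h$ at the exact threshold $u_0$, explains why the stated threshold cannot be lowered much within this scheme, and makes the role of $e-1$ transparent. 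One small point of care that you did handle: in the case split you need $u_0>k$ and $\ln k>0$ so that $\ln\ln k$ is defined and monotonicity applies, and both follow from $\ln k>1-\tfrac1e$ and $k=c/\alpha>1$; with that, strict positivity for $u>u_0$ follows as you say.
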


\begin{proof}
Let's study the function $G(T) = T-c\ln\left(T^{\frac{1}{\alpha}}\right)= T- \frac{c}{\alpha}\ln\left(T\right)$.
For $T \geq \frac{c}{\alpha}e$ by using the concavity of the logarithm and differentiating $T \mapsto \frac{c}{\alpha}\ln\left(T\right)$ at the point $\frac{c}{\alpha}e$  we have that :

\begin{align*}
\frac{c}{\alpha}\ln\left(T\right) &< \frac{c}{\alpha}\ln\left(\frac{c}{\alpha}e\right) + \left(T - \frac{c}{\alpha}e\right) \frac{1}{e}\\
&< \frac{T}{e}+ \frac{c}{\alpha}\ln\left(\frac{c}{\alpha}\right)
\end{align*}
Then for $T \geq \frac{c}{\alpha}e$

\begin{align*}
T- \frac{c}{\alpha}\ln\left(T\right) & > T-\frac{T}{e}- \frac{c}{\alpha}\ln\left(\frac{c}{\alpha}\right)\\
&> \left(1-\frac{1}{e}\right)T - \frac{c}{\alpha}\ln( \frac{c}{\alpha} )
\end{align*}
So for $T >  \frac{\frac{c}{\alpha}\ln( \frac{c}{\alpha})}{1- \frac{1}{e}}, T - \frac{c}{\alpha}\ln\left(T\right) > 0$. Which means that by a change of variable that for $t^{\alpha} >  \frac{\frac{c}{\alpha}\ln( \frac{c}{\alpha})}{1- \frac{1}{e}} \iff t > (\frac{\frac{c}{\alpha}\ln \frac{c}{\alpha}}{1- \frac{1}{e}})^{\frac{1}{\alpha}}$ we have that $f(t) > 0.$  
Hence, the first result.

We have using that $ \ln(x) < x$ :
\begin{align*}
    \left(\ln \frac{c}{\alpha} \right)^{\frac{1}{\alpha}} &= \left( \frac{1}{\alpha}\ln \left(\left(\frac{c}{\alpha} \right)^{\alpha} \right) \right)^{\frac{1}{\alpha}} \\
    &< \left(\frac{1}{\alpha}\right)^{\frac{1}{\alpha}} \frac{c}{\alpha}
\end{align*}
Finally, combined with the rest, we get the second result.

\end{proof}

\begin{lemf}
\label{lemf:integraleseries}
Let $c>0, \alpha > 0$ the series $\sum_{t=1}^{\top} \exp(-ct^{\alpha})$ converges when $T\rightarrow + \infty$ and :

\[
\sum_{t=1}^{\infty} \exp(-ct^{\alpha}) < \frac{c^{-\frac{1}{\alpha}}}{\alpha}\Gamma(\frac{1}{\alpha})
\]

\end{lemf}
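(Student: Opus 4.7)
The plan is to bound the series by an improper integral and then evaluate that integral explicitly in terms of the Gamma function. The function $\phi(t) := \exp(-c t^{\alpha})$ is positive, continuous, and strictly decreasing on $[0,\infty)$ since $c, \alpha > 0$. Consequently, for every integer $t \ge 1$ and every $u \in [t-1, t]$ we have $\phi(t) \le \phi(u)$, so
\[
\sum_{t=1}^{N} \phi(t) \;\le\; \sum_{t=1}^{N} \int_{t-1}^{t} \phi(u)\, du \;=\; \int_{0}^{N} \phi(u)\, du \;\le\; \int_{0}^{\infty} e^{-cu^\alpha} du.
\]
Since the right-hand side is finite (see below), letting $N \to \infty$ gives both the convergence of the series and the bound $\sum_{t=1}^{\infty} \phi(t) \le \int_{0}^{\infty} e^{-cu^\alpha} du$. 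Strictness of the inequality follows from the strict decrease of $\phi$ on any nontrivial subinterval.

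Next I would evaluate the integral by the substitution $v = c u^\alpha$, which gives $u = (v/c)^{1/\alpha}$ and $du = \tfrac{1}{\alpha} c^{-1/\alpha} v^{1/\alpha - 1} dv$. Therefore
\[
\int_{0}^{\infty} e^{-cu^\alpha} du \;=\; \frac{c^{-1/\alpha}}{\alpha} \int_{0}^{\infty} e^{-v}\, v^{1/\alpha - 1} dv \;=\; \frac{c^{-1/\alpha}}{\alpha}\, \Gamma\!\left(\tfrac{1}{\alpha}\right),
\]
by the definition of $\Gamma$. Combining the two displays yields the announced bound.

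There is essentially no obstacle here: the only minor care is (i) checking that $\phi$ is decreasing on all of $[0,\infty)$ so that the integral comparison on $[0,N]$ (as opposed to $[1,N]$) is valid, which is immediate from $\alpha, c > 0$, and (ii) the elementary change of variables used to identify the integral with $\Gamma(1/\alpha)$. The strict inequality in the final conclusion comes from the fact that $\phi$ is strictly decreasing, so $\phi(t) < \phi(u)$ for $u \in [t-1, t)$ on a set of positive measure.
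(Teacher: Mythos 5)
Your proposal is correct and follows essentially the same route as the paper: compare the sum with $\int_{0}^{\infty} e^{-cu^{\alpha}}\,du$ using monotonicity, then identify that integral with $\frac{c^{-1/\alpha}}{\alpha}\Gamma\left(\frac{1}{\alpha}\right)$ (your explicit substitution $v=cu^{\alpha}$ is the same computation the paper packages as a stated primitive). No gaps.
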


\begin{proof}
Because the function $t \mapsto \exp(-ct^{\alpha})$ is decreasing, by an integral test for convergence, we have that :

\[ \sum\limits_{t=1}^{\infty} \exp(-ct^{\alpha}) < \int\limits_{0}^{+\infty}\exp(-ct^{\alpha})dt .\]

A primitive of the function : $t \mapsto \exp(-ct^{\alpha})$ is $T \mapsto \frac{c^{-\frac{1}{\alpha}}}{\alpha}\int\limits_{cT^{\alpha}}^{+\infty}t^{\frac{1}{\alpha}-1}e^{-t}dt$. So : 
\begin{align*}
\sum\limits_{t=1}^{\infty} \exp(-ct^{\alpha}) &< \frac{c^{-\frac{1}{\alpha}}}{\alpha}\int\limits_{0}^{+\infty}t^{\frac{1}{\alpha}-1}e^{-t}dt - 0\\
& \leq \frac{c^{-\frac{1}{\alpha}}}{\alpha}\Gamma(\frac{1}{\alpha})
\end{align*}

\end{proof}

\newpage
\section*{NeurIPS Paper Checklist}

\begin{enumerate}

\item {\bf Claims}
    \item[] Question: Do the main claims made in the abstract and introduction accurately reflect the paper's contributions and scope?
    \item[] Answer: \answerYes{} 
    \item[] Justification: We introduced the setting and the assumption needed so that our main result is applicable. We tried not to overstate our results and be as close as possible of our theoretical results.  
    \item[] Guidelines:
    \begin{itemize}
        \item The answer NA means that the abstract and introduction do not include the claims made in the paper.
        \item The abstract and/or introduction should clearly state the claims made, including the contributions made in the paper and important assumptions and limitations. A No or NA answer to this question will not be perceived well by the reviewers. 
        \item The claims made should match theoretical and experimental results, and reflect how much the results can be expected to generalize to other settings. 
        \item It is fine to include aspirational goals as motivation as long as it is clear that these goals are not attained by the paper. 
    \end{itemize}

\item {\bf Limitations}
    \item[] Question: Does the paper discuss the limitations of the work performed by the authors?
    \item[] Answer: \answerYes{} 
    \item[] Justification: As a theoretical paper, we only claim that our result is true under some assumptions. We discussed the computational complexity of our method under the assumption that a certain linear combinatorial problem can be solved efficiently.
    \item[] Guidelines:
    \begin{itemize}
        \item The answer NA means that the paper has no limitation while the answer No means that the paper has limitations, but those are not discussed in the paper. 
        \item The authors are encouraged to create a separate "Limitations" section in their paper.
        \item The paper should point out any strong assumptions and how robust the results are to violations of these assumptions (e.g., independence assumptions, noiseless settings, model well-specification, asymptotic approximations only holding locally). The authors should reflect on how these assumptions might be violated in practice and what the implications would be.
        \item The authors should reflect on the scope of the claims made, e.g., if the approach was only tested on a few datasets or with a few runs. In general, empirical results often depend on implicit assumptions, which should be articulated.
        \item The authors should reflect on the factors that influence the performance of the approach. For example, a facial recognition algorithm may perform poorly when image resolution is low or images are taken in low lighting. Or a speech-to-text system might not be used reliably to provide closed captions for online lectures because it fails to handle technical jargon.
        \item The authors should discuss the computational efficiency of the proposed algorithms and how they scale with dataset size.
        \item If applicable, the authors should discuss possible limitations of their approach to address problems of privacy and fairness.
        \item While the authors might fear that complete honesty about limitations might be used by reviewers as grounds for rejection, a worse outcome might be that reviewers discover limitations that aren't acknowledged in the paper. The authors should use their best judgment and recognize that individual actions in favor of transparency play an important role in developing norms that preserve the integrity of the community. Reviewers will be specifically instructed to not penalize honesty concerning limitations.
    \end{itemize}

\item {\bf Theory Assumptions and Proofs}
    \item[] Question: For each theoretical result, does the paper provide the full set of assumptions and a complete (and correct) proof?
    \item[] Answer: \answerYes{} 
    \item[] Justification: We clearly stated the assumptions needed for our main result to hold. We provided complete proof of our results in the appendix with its lemmas. And we refer to previous works by citing them when needed.
    \item[] Guidelines:
    \begin{itemize}
        \item The answer NA means that the paper does not include theoretical results. 
        \item All the theorems, formulas, and proofs in the paper should be numbered and cross-referenced.
        \item All assumptions should be clearly stated or referenced in the statement of any theorems.
        \item The proofs can either appear in the main paper or the supplemental material, but if they appear in the supplemental material, the authors are encouraged to provide a short proof sketch to provide intuition. 
        \item Inversely, any informal proof provided in the core of the paper should be complemented by formal proofs provided in appendix or supplemental material.
        \item Theorems and Lemmas that the proof relies upon should be properly referenced. 
    \end{itemize}

    \item {\bf Experimental Result Reproducibility}
    \item[] Question: Does the paper fully disclose all the information needed to reproduce the main experimental results of the paper to the extent that it affects the main claims and/or conclusions of the paper (regardless of whether the code and data are provided or not)?
    \item[] Answer: \answerYes{} 
    \item[] Justification: While this is not an experimental-focused paper, we provided all the details needed to reproduce the plots found section 6: The parameters used, the number of arms, and the number of runs.
    \item[] Guidelines:
    \begin{itemize}
        \item The answer NA means that the paper does not include experiments.
        \item If the paper includes experiments, a No answer to this question will not be perceived well by the reviewers: Making the paper reproducible is important, regardless of whether the code and data are provided or not.
        \item If the contribution is a dataset and/or model, the authors should describe the steps taken to make their results reproducible or verifiable. 
        \item Depending on the contribution, reproducibility can be accomplished in various ways. For example, if the contribution is a novel architecture, describing the architecture fully might suffice, or if the contribution is a specific model and empirical evaluation, it may be necessary to either make it possible for others to replicate the model with the same dataset, or provide access to the model. In general. releasing code and data is often one good way to accomplish this, but reproducibility can also be provided via detailed instructions for how to replicate the results, access to a hosted model (e.g., in the case of a large language model), releasing of a model checkpoint, or other means that are appropriate to the research performed.
        \item While NeurIPS does not require releasing code, the conference does require all submissions to provide some reasonable avenue for reproducibility, which may depend on the nature of the contribution. For example
        \begin{enumerate}
            \item If the contribution is primarily a new algorithm, the paper should make it clear how to reproduce that algorithm.
            \item If the contribution is primarily a new model architecture, the paper should describe the architecture clearly and fully.
            \item If the contribution is a new model (e.g., a large language model), then there should either be a way to access this model for reproducing the results or a way to reproduce the model (e.g., with an open-source dataset or instructions for how to construct the dataset).
            \item We recognize that reproducibility may be tricky in some cases, in which case authors are welcome to describe the particular way they provide for reproducibility. In the case of closed-source models, it may be that access to the model is limited in some way (e.g., to registered users), but it should be possible for other researchers to have some path to reproducing or verifying the results.
        \end{enumerate}
    \end{itemize}

\item {\bf Open access to data and code}
    \item[] Question: Does the paper provide open access to the data and code, with sufficient instructions to faithfully reproduce the main experimental results, as described in supplemental material?
    \item[] Answer: \answerNA{} 
    \item[] Justification: The paper generates synthetic data of the bandit model considered.
    \item[] Guidelines:
    \begin{itemize}
        \item The answer NA means that paper does not include experiments requiring code.
        \item Please see the NeurIPS code and data submission guidelines (\url{https://nips.cc/public/guides/CodeSubmissionPolicy}) for more details.
        \item While we encourage the release of code and data, we understand that this might not be possible, so “No” is an acceptable answer. Papers cannot be rejected simply for not including code, unless this is central to the contribution (e.g., for a new open-source benchmark).
        \item The instructions should contain the exact command and environment needed to run to reproduce the results. See the NeurIPS code and data submission guidelines (\url{https://nips.cc/public/guides/CodeSubmissionPolicy}) for more details.
        \item The authors should provide instructions on data access and preparation, including how to access the raw data, preprocessed data, intermediate data, and generated data, etc.
        \item The authors should provide scripts to reproduce all experimental results for the new proposed method and baselines. If only a subset of experiments are reproducible, they should state which ones are omitted from the script and why.
        \item At submission time, to preserve anonymity, the authors should release anonymized versions (if applicable).
        \item Providing as much information as possible in supplemental material (appended to the paper) is recommended, but including URLs to data and code is permitted.
    \end{itemize}

\item {\bf Experimental Setting/Details}
    \item[] Question: Does the paper specify all the training and test details (e.g., data splits, hyperparameters, how they were chosen, type of optimizer, etc.) necessary to understand the results?
    \item[] Answer: \answerYes{}
    \item[] Justification: The parameters of the algorithm and the environment are provided.
    \item[] Guidelines:
    \begin{itemize}
        \item The answer NA means that the paper does not include experiments.
        \item The experimental setting should be presented in the core of the paper to a level of detail that is necessary to appreciate the results and make sense of them.
        \item The full details can be provided either with the code, in appendix, or as supplemental material.
    \end{itemize}

\item {\bf Experiment Statistical Significance}
    \item[] Question: Does the paper report error bars suitably and correctly defined or other appropriate information about the statistical significance of the experiments?
    \item[] Answer: \answerYes{} 
    \item[] Justification: We plot the confidence intervals of the regret in the simulation section, with $2$ standard deviations error intervals.
    \item[] Guidelines:
    \begin{itemize}
        \item The answer NA means that the paper does not include experiments.
        \item The authors should answer "Yes" if the results are accompanied by error bars, confidence intervals, or statistical significance tests, at least for the experiments that support the main claims of the paper.
        \item The factors of variability that the error bars are capturing should be clearly stated (for example, train/test split, initialization, random drawing of some parameter, or overall run with given experimental conditions).
        \item The method for calculating the error bars should be explained (closed form formula, call to a library function, bootstrap, etc.)
        \item The assumptions made should be given (e.g., Normally distributed errors).
        \item It should be clear whether the error bar is the standard deviation or the standard error of the mean.
        \item It is OK to report 1-sigma error bars, but one should state it. The authors should preferably report a 2-sigma error bar than state that they have a 96\% CI, if the hypothesis of Normality of errors is not verified.
        \item For asymmetric distributions, the authors should be careful not to show in tables or figures symmetric error bars that would yield results that are out of range (e.g. negative error rates).
        \item If error bars are reported in tables or plots, The authors should explain in the text how they were calculated and reference the corresponding figures or tables in the text.
    \end{itemize}

\item {\bf Experiments Compute Resources}
    \item[] Question: For each experiment, does the paper provide sufficient information on the computer resources (type of compute workers, memory, time of execution) needed to reproduce the experiments?
    \item[] Answer: \answerNo{} 
    \item[] Justification: Because of lack of space. Here they are: the experiments were run on a single core of i9 (i9-12900H) laptop computer with 32GB of RAM with python 3. The experiments took less than 1 hour to run.
    \item[] Guidelines:
    \begin{itemize}
        \item The answer NA means that the paper does not include experiments.
        \item The paper should indicate the type of compute workers CPU or GPU, internal cluster, or cloud provider, including relevant memory and storage.
        \item The paper should provide the amount of compute required for each of the individual experimental runs as well as estimate the total compute. 
        \item The paper should disclose whether the full research project required more compute than the experiments reported in the paper (e.g., preliminary or failed experiments that didn't make it into the paper). 
    \end{itemize}
    
\item {\bf Code Of Ethics}
    \item[] Question: Does the research conducted in the paper conform, in every respect, with the NeurIPS Code of Ethics \url{https://neurips.cc/public/EthicsGuidelines}?
    \item[] Answer: \answerYes{} 
    \item[] Justification: We don't think that our research violates the NeurIPS Code of Ethics.
    \item[] Guidelines:
    \begin{itemize}
        \item The answer NA means that the authors have not reviewed the NeurIPS Code of Ethics.
        \item If the authors answer No, they should explain the special circumstances that require a deviation from the Code of Ethics.
        \item The authors should make sure to preserve anonymity (e.g., if there is a special consideration due to laws or regulations in their jurisdiction).
    \end{itemize}

\item {\bf Broader Impacts}
    \item[] Question: Does the paper discuss both potential positive societal impacts and negative societal impacts of the work performed?
    \item[] Answer: \answerNA{}. 
    \item[] Justification: The work is for the moment very theoretical. It could be somehow used in some optimization algorithms but are far from real life applications. 
    \item[] Guidelines:
    \begin{itemize}
        \item The answer NA means that there is no societal impact of the work performed.
        \item If the authors answer NA or No, they should explain why their work has no societal impact or why the paper does not address societal impact.
        \item Examples of negative societal impacts include potential malicious or unintended uses (e.g., disinformation, generating fake profiles, surveillance), fairness considerations (e.g., deployment of technologies that could make decisions that unfairly impact specific groups), privacy considerations, and security considerations.
        \item The conference expects that many papers will be foundational research and not tied to particular applications, let alone deployments. However, if there is a direct path to any negative applications, the authors should point it out. For example, it is legitimate to point out that an improvement in the quality of generative models could be used to generate deepfakes for disinformation. On the other hand, it is not needed to point out that a generic algorithm for optimizing neural networks could enable people to train models that generate Deepfakes faster.
        \item The authors should consider possible harms that could arise when the technology is being used as intended and functioning correctly, harms that could arise when the technology is being used as intended but gives incorrect results, and harms following from (intentional or unintentional) misuse of the technology.
        \item If there are negative societal impacts, the authors could also discuss possible mitigation strategies (e.g., gated release of models, providing defenses in addition to attacks, mechanisms for monitoring misuse, mechanisms to monitor how a system learns from feedback over time, improving the efficiency and accessibility of ML).
    \end{itemize}
    
\item {\bf Safeguards}
    \item[] Question: Does the paper describe safeguards that have been put in place for responsible release of data or models that have a high risk for misuse (e.g., pretrained language models, image generators, or scraped datasets)?
    \item[] Answer: \answerNA{}. 
    \item[] Justification: This is theoretical work and no data or model was released.
    \item[] Guidelines:
    \begin{itemize}
        \item The answer NA means that the paper poses no such risks.
        \item Released models that have a high risk for misuse or dual-use should be released with necessary safeguards to allow for controlled use of the model, for example by requiring that users adhere to usage guidelines or restrictions to access the model or implementing safety filters. 
        \item Datasets that have been scraped from the Internet could pose safety risks. The authors should describe how they avoided releasing unsafe images.
        \item We recognize that providing effective safeguards is challenging, and many papers do not require this, but we encourage authors to take this into account and make a best faith effort.
    \end{itemize}

\item {\bf Licenses for existing assets}
    \item[] Question: Are the creators or original owners of assets (e.g., code, data, models), used in the paper, properly credited and are the license and terms of use explicitly mentioned and properly respected?
    \item[] Answer:\answerNA{} 
    \item[] Justification: Apart from python commonly available libraries, no asset was used.
    \item[] Guidelines:
    \begin{itemize}
        \item The answer NA means that the paper does not use existing assets.
        \item The authors should cite the original paper that produced the code package or dataset.
        \item The authors should state which version of the asset is used and, if possible, include a URL.
        \item The name of the license (e.g., CC-BY 4.0) should be included for each asset.
        \item For scraped data from a particular source (e.g., website), the copyright and terms of service of that source should be provided.
        \item If assets are released, the license, copyright information, and terms of use in the package should be provided. For popular datasets, \url{paperswithcode.com/datasets} has curated licenses for some datasets. Their licensing guide can help determine the license of a dataset.
        \item For existing datasets that are re-packaged, both the original license and the license of the derived asset (if it has changed) should be provided.
        \item If this information is not available online, the authors are encouraged to reach out to the asset's creators.
    \end{itemize}

\item {\bf New Assets}
    \item[] Question: Are new assets introduced in the paper well documented and is the documentation provided alongside the assets?
    \item[] Answer: \answerNA{}. 
    \item[] Justification: No new assets were introduced.
    \item[] Guidelines:
    \begin{itemize}
        \item The answer NA means that the paper does not release new assets.
        \item Researchers should communicate the details of the dataset/code/model as part of their submissions via structured templates. This includes details about training, license, limitations, etc. 
        \item The paper should discuss whether and how consent was obtained from people whose asset is used.
        \item At submission time, remember to anonymize your assets (if applicable). You can either create an anonymized URL or include an anonymized zip file.
    \end{itemize}

\item {\bf Crowdsourcing and Research with Human Subjects}
    \item[] Question: For crowdsourcing experiments and research with human subjects, does the paper include the full text of instructions given to participants and screenshots, if applicable, as well as details about compensation (if any)? 
    \item[] Answer: \answerNA{}. 
    \item[] Justification: Synthetic data was used.
    \item[] Guidelines:
    \begin{itemize}
        \item The answer NA means that the paper does not involve crowdsourcing nor research with human subjects.
        \item Including this information in the supplemental material is fine, but if the main contribution of the paper involves human subjects, then as much detail as possible should be included in the main paper. 
        \item According to the NeurIPS Code of Ethics, workers involved in data collection, curation, or other labor should be paid at least the minimum wage in the country of the data collector. 
    \end{itemize}

\item {\bf Institutional Review Board (IRB) Approvals or Equivalent for Research with Human Subjects}
    \item[] Question: Does the paper describe potential risks incurred by study participants, whether such risks were disclosed to the subjects, and whether Institutional Review Board (IRB) approvals (or an equivalent approval/review based on the requirements of your country or institution) were obtained?
    \item[] Answer: \answerNA{}. 
    \item[] Justification: No human data used.
    \item[] Guidelines:
    \begin{itemize}
        \item The answer NA means that the paper does not involve crowdsourcing nor research with human subjects.
        \item Depending on the country in which research is conducted, IRB approval (or equivalent) may be required for any human subjects research. If you obtained IRB approval, you should clearly state this in the paper. 
        \item We recognize that the procedures for this may vary significantly between institutions and locations, and we expect authors to adhere to the NeurIPS Code of Ethics and the guidelines for their institution. 
        \item For initial submissions, do not include any information that would break anonymity (if applicable), such as the institution conducting the review.
    \end{itemize}

\end{enumerate}

\end{document}